\documentclass{article} %
\usepackage{iclr2019_conference,times}

\usepackage{amsmath,amsfonts,bm}

\def\eqref#1{equation~\ref{#1}}

\def\1{\bm{1}}

\def\vz{{\bm{z}}}

\DeclareMathAlphabet{\mathsfit}{\encodingdefault}{\sfdefault}{m}{sl}
\SetMathAlphabet{\mathsfit}{bold}{\encodingdefault}{\sfdefault}{bx}{n}

\newcommand{\R}{\mathbb{R}}

\usepackage{hyperref}
\usepackage{url}
\usepackage{subcaption}
\usepackage{comment}
\usepackage{amsmath,amsthm,amssymb,dsfont,amsfonts,bbm}
\usepackage{booktabs}       %
\usepackage{algorithm, algorithmicx, algpseudocode}
\usepackage{appendix}
\usepackage{nicefrac}
\usepackage{listings}
\usepackage{color}
\usepackage{float}

\newcommand{\D}{\mathcal{D}}
\newcommand{\ex}{\mathbb{E}}
\newcommand{\bx}{\mathbf{x}}
\newcommand{\by}{\mathbf{y}}
\newcommand{\bs}{\mathbf{s}}
\newcommand{\byh}{\widehat{\mathbf{y}}}
\newcommand{\bz}{\mathbf{z}}
\newcommand{\be}{\mathbf{e}}
\newcommand{\ind}{\mathbbm{1}}

\usepackage{tikz,tkz-graph}
\usetikzlibrary{
  matrix,
  graphs,
  graphs.standard,
  positioning,chains,fit,shapes,calc,
  decorations.pathreplacing
}

\newtheorem{theorem}{Theorem}
\newtheorem{lemma}[theorem]{Lemma} 
\newtheorem{proposition}[theorem]{Proposition}

\newtheorem{corollary}[theorem]{Corollary}
\newtheorem{definition}[theorem]{Definition}

\newcommand{\ie}{\textit{i.e.}}
\newcommand{\eg}{\textit{e.g.}}
\newcommand{\sort}{\texttt{sort}}
\newcommand{\name}{NeuralSort}

\title{Stochastic Optimization of Sorting Networks via Continuous Relaxations}

\author{Aditya Grover$^\ast$, Eric Wang$^\ast$, Aaron Zweig \& Stefano Ermon \\
Computer Science Department\\
Stanford University\\
\texttt{\{adityag,ejwang,azweig,ermon\}@cs.stanford.edu} 
}

\iclrfinalcopy %

\begin{document}

\maketitle
\let\thefootnote\relax\footnotetext{$^\ast$Equal contribution}

\begin{abstract}
Sorting input objects is an important step in many machine learning pipelines.
However, the sorting operator is non-differentiable with respect to its inputs, which prohibits end-to-end gradient-based optimization. %
In this work, we propose \name{}, a
general-purpose continuous relaxation of the output of the sorting operator from permutation matrices to the set of \textit{unimodal} row-stochastic matrices, where every row sums to one and has a distinct $\arg \max$.
This relaxation permits straight-through optimization of any computational graph involve a sorting operation.
Further, we use this relaxation to enable gradient-based stochastic optimization over the combinatorially large space of permutations 
 by deriving a reparameterized gradient estimator for the Plackett-Luce family of distributions over permutations.
We demonstrate the usefulness of our framework on three 
tasks that require learning semantic orderings of high-dimensional objects, including a fully differentiable, parameterized extension of the $k$-nearest neighbors algorithm.
\end{abstract}

\section{Introduction}

Learning to automatically sort objects is useful in many machine learning applications, such as top-$k$ multi-class classification~\citep{berrada2018smooth}, ranking documents for information retrieval~\citep{liu2009learning}, and multi-object target tracking in computer vision~\citep{bar1995multitarget}. 
Such algorithms typically require learning informative representations of complex, high-dimensional data, such as images, before sorting and subsequent downstream processing. 
For instance, the $k$-nearest neighbors image classification algorithm, which orders the neighbors based on distances in the canonical pixel basis, can be highly suboptimal for classification~\citep{weinberger2006distance}. 
Deep neural networks can instead be used to learn representations, but these representations cannot be optimized end-to-end for a downstream sorting-based objective, since the sorting operator is not differentiable with respect to its input.

In this work, we seek to remedy this shortcoming by proposing \name{}, a continuous relaxation to the sorting operator that is differentiable almost everywhere with respect to the inputs.
The output of any sorting algorithm 
can be viewed as 
a permutation matrix, which is a square matrix with entries in $\{0, 1\}$ such that every row and every column sums to 1.
Instead of a permutation matrix, \name{} returns a \textit{unimodal} row-stochastic matrix.
A unimodal row-stochastic matrix is defined as a square matrix with positive real entries, where each row sums to 1 and has a distinct $\arg \max$. All permutation matrices are unimodal row-stochastic matrices.
\name{} has a temperature knob that controls the degree of approximation, such that in the limit of zero temperature, we recover a permutation matrix that sorts the inputs.
Even for a non-zero temperature, we can efficiently project any unimodal matrix to the desired permutation matrix via a simple row-wise $\arg \max$ operation.
Hence, \name{} is also suitable for efficient \textit{straight-through gradient optimization}~\citep{bengio2013estimating}, which requires ``exact" permutation matrices to evaluate learning objectives.

As the second primary contribution, we consider the use of \name{} for stochastic optimization over permutations.
In many cases, such as latent variable models, the permutations may be latent but directly influence observed behavior, \eg, utility and choice models are often expressed as distributions over permutations which govern the observed decisions of agents~\citep{regenwetter2006behavioral,chierichetti2018discrete}.
By learning distributions over unobserved permutations,
we can account for the uncertainty in these permutations in a principled manner. However, the challenge with stochastic optimization over discrete distributions lies in gradient estimation with respect to the distribution parameters. Vanilla REINFORCE estimators are impractical for most cases, or necessitate custom control variates for low-variance gradient estimation~\citep{glasserman2013monte}. 

In this regard, we consider the Plackett-Luce (PL) family of distributions over permutations~\citep{plackett1975analysis,luce2012individual}.
A common modeling choice for ranking models, the PL distribution is parameterized by $n$ scores, with its support defined over the symmetric group consisting of $n!$ permutations.
We derive a reparameterizable sampler for stochastic optimization with respect to this distribution, based on Gumbel perturbations to the $n$ (log-)scores. 
However, the reparameterized sampler requires sorting these perturbed scores, and hence the gradients of a downstream learning objective with respect to the scores are not defined.
By using \name{} instead, we can approximate the objective and obtain well-defined reparameterized gradient estimates for stochastic optimization.

Finally, we apply \name{} to tasks that require us to learn semantic orderings of complex, high-dimensional input data. First, we consider sorting images of handwritten digits, where the goal is to learn to sort images by their unobserved labels. 
Our second task extends the first one to 
quantile regression, where we want to estimate the median ($50$-th percentile) of a set of handwritten numbers. In addition to identifying the index of the median image in the sequence, we need to learn to map the inferred median digit to its scalar representation. In the third task, we propose an algorithm that
learns a basis representation for the $k$-nearest neighbors (kNN) classifier in an end-to-end procedure. Because the choice of the $k$ nearest neighbors requires a non-differentiable sorting, we use \name{} to obtain an approximate, differentiable surrogate.
On all tasks, we observe significant empirical improvements due to \name{} over the relevant baselines and competing relaxations to permutation matrices.

\section{Preliminaries}\label{sec:prelim}

An $n$-dimensional permutation $\mathbf{z} = [z_1, z_2, \ldots, z_n]^T$ is a list of unique indices $\{1,2,\ldots,n\}$. 
Every permutation $\mathbf{z}$ is associated with a permutation matrix $P_\mathbf{z}\in {\{0,1\}^{n \times n}}$ with entries given as:
\begin{align*}
P_\mathbf{z}[i,j] = 
\begin{cases}
1 \text{ if } j = z_i\\
0 \text{ otherwise}.
\end{cases}
\end{align*}
Let $\mathcal{Z}_n$ denote the set of all $n!$ possible permutations in the symmetric group. We define the $\sort{}: \mathbb{R}^n \rightarrow \mathcal{Z}_n$ operator as a mapping of $n$ real-valued inputs to a permutation corresponding to a descending ordering of these inputs. \textit{E.g.}, if the input vector $\mathbf{s} = [9, 1, 5, 2]^T$, then $\sort{}(\mathbf{s}) = [1,3,4,2]^T$ since the largest element is at the first index, second largest element is at the third index and so on. In case of ties, elements are assigned indices in the order they appear. We can obtain the sorted vector simply via $P_{\sort{}(\mathbf{s})}\mathbf{s}$.

\subsection{Plackett-Luce distributions}
The family of Plackett-Luce distributions over permutations is best described via a generative process: Consider a sequence of $n$ items, each associated with a canonical index $i=1,2,\ldots,n$. A common assumption in ranking models is that the underlying generating process for any observed permutation of $n$ items satisfies Luce's choice axiom~\citep{luce2012individual}. Mathematically, this axiom defines the `choice' probability of an item with index $i$ as:
$q(i) \propto s_i$ 
where  $s_i > 0$ is interpreted as the score of item with index $i$. The normalization constant is given by $Z=\sum_{i \in \{1,2, \ldots, n\}} s_i$. 

If we choose the $n$ items one at a time (without replacement) based on these choice probabilities, we obtain a discrete distribution over all possible permutations. This distribution is referred to as the Plackett-Luce (PL) distribution, and its probability mass function for any $\mathbf{z} \in \mathcal{Z}_n$ is given by:
\begin{align}
q(\mathbf{z} \vert \mathbf{s}) &= \frac{s_{z_1}}{Z} \frac{s_{z_2}}{Z- s_{z_1}} \cdots  \frac{s_{z_n}}{Z- \sum_{i=1}^{n-1}s_{z_i}}
\end{align}
where $\mathbf{s}=\{s_1,s_2,\ldots,s_n\}$ is the vector of scores parameterizing this distribution~\citep{plackett1975analysis}.

\subsection{Stochastic computation graphs}

The abstraction of \emph{stochastic computation graphs} (SCG) compactly specifies the forward value and the backward gradient computation for computational circuits.
An SCG is a directed acyclic graph that consists of three kinds of nodes: \emph{input} nodes which specify external inputs (including parameters), \emph{deterministic} nodes which are deterministic functions of their parents, and \emph{stochastic} nodes which are distributed conditionally on their parents. See \citet{schulman2015gradient} for a review.

To define gradients of an objective function with respect to any node in the graph, the chain rule necessitates that the gradients with respect to the intermediate nodes are well-defined. This is not the case for the \sort{} operator. In Section~\ref{sec:sorting}, we propose to extend stochastic computation graphs with nodes corresponding to a relaxation of the deterministic \sort{} operator.
In Section~\ref{sec:pl}, we further use this relaxation to extend computation graphs to include stochastic nodes corresponding to distributions over permutations.
The proofs of all theoretical results in this work are deferred to Appendix~\ref{app:enc}.

\section{\name{}: The Relaxed Sorting Operator}\label{sec:sorting}
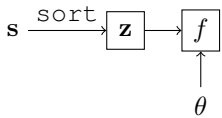
\begin{figure*}[t]
\centering

 \begin{tikzpicture}
 \node[draw=none,fill=none] (c) at (1,1) {$\mathbf{s} $};
 \node[rectangle] (d) at (2.5,1) [draw, minimum width=0.5cm,minimum height=0.5cm] {$\mathbf{z}$};
 \node[draw=none,fill=none] (e) at (3.5,0) {$\theta$};
 \node[rectangle] (f) at (3.5,1) [draw, minimum width=0.5cm,minimum height=0.5cm] {$f$};
 \foreach \from/\to in {d/f, e/f}
\draw [->] (\from) -- (\to);
 \draw [->] (c) -- node [above] {\texttt{sort}} (d);
  \end{tikzpicture}
\caption{Stochastic computation graphs with a deterministic node $\mathbf{z}$ corresponding to the output of a sort operator applied to the scores $\mathbf{s}$.}\label{fig:stoc_comp_graphs_exp_deterministic}
 \end{figure*}

Our goal is to optimize training objectives involving a $\sort{}$ operator with gradient-based methods. Consider the optimization of objectives written in the following form:
\begin{align}\label{eq:deterministic_sort_obj}
    L(\theta, \mathbf{s}) = f(P_\mathbf{z}; \theta)\\
    \text{where } \mathbf{z} = \sort{}(\mathbf{s}).\nonumber
\end{align}
Here, $\mathbf{s} \in \mathbb{R}^n$ denotes a vector of $n$ real-valued scores, $\mathbf{z}$ is the permutation that (deterministically) sorts the scores $\mathbf{s}$, and $f(\cdot)$ is an arbitrary function of interest assumed to be differentiable w.r.t a set of parameters $\theta$ and $\mathbf{z}$. For example, in a ranking application, these scores could correspond to the inferred relevances of $n$ webpages and $f(\cdot)$ could be a ranking loss. Figure~\ref{fig:stoc_comp_graphs_exp_deterministic} shows the stochastic computation graph corresponding to the objective in Eq.~\ref{eq:deterministic_sort_obj}. We note that this could represent part of a more complex computation graph,
which we skip for ease of presentation while maintaining the generality of the scope of this work.

While the gradient of the above objective w.r.t. $\theta$ is well-defined and can be computed via standard backpropogation, the gradient w.r.t. the scores $\mathbf{s}$ is not defined since the \sort{} operator is not differentiable w.r.t. $\mathbf{s}$. Our solution is to derive a relaxation to the \sort{} operator that leads to a surrogate objective with well-defined gradients. In particular, we seek to use such a relaxation to replace the permutation matrix $P_\mathbf{z}$ in Eq.~\ref{eq:deterministic_sort_obj} with an approximation $\widehat{P}_\mathbf{z}$ such that the surrogate objective $f(\widehat{P}_\mathbf{z}; \theta)$ is differentiable w.r.t. the scores $\mathbf{s}$.

The general recipe to relax non-differentiable operators with \emph{discrete} codomains $\mathcal{N}$
is to consider differentiable alternatives that map the input to a larger \emph{continuous} codomain $\mathcal{M}$ with desirable properties. For gradient-based optimization, we are interested in two key properties:
\begin{enumerate}
    \item The relaxation is continuous everywhere and differentiable (almost-)everywhere with respect to elements in the input domain. 
    \item There exists a computationally efficient projection from $\mathcal{M}$ back to $\mathcal{N}$. 
\end{enumerate}

Relaxations satisfying the first requirement are amenable to automatic differentiation for optimizing stochastic computational graphs.
The second requirement is useful for evaluating metrics and losses that necessarily require a discrete output akin to the one obtained from the original, non-relaxed operator. \textit{E.g.},
in straight-through gradient estimation~\citep{bengio2013estimating, jang2016categorical}, the non-relaxed operator is used for evaluating the learning objective in the forward pass and the relaxed operator is used in the backward pass for gradient estimation.

The canonical example is the $0/1$ loss used for binary classification. While the $0/1$ loss is discontinuous w.r.t. its inputs (real-valued predictions from a model),
surrogates such as the logistic and hinge losses 
are continuous everywhere and differentiable almost-everywhere (property 1), and can give hard binary predictions via thresholding (property 2). 

\textit{Note:} For brevity, we assume that the $\arg \max$ operator is applied over a set of elements with a unique maximizer and hence, the operator has well-defined semantics. With some additional bookkeeping for resolving ties, the results in this section hold even if the elements to be sorted are not unique. See Appendix~\ref{app:ties}.

\paragraph{Unimodal Row Stochastic Matrices.} The \sort{} operator maps the input vector to a permutation, or equivalently a permutation matrix. Our relaxation to \sort{} is motivated by the geometric structure of permutation matrices. The set of permutation matrices is a subset of \textit{doubly-stochastic matrices}, i.e., a non-negative matrix such that the every row and column sums to one. If we remove the requirement that every column should sum to one, we obtain a larger set of \textit{row stochastic matrices}. In this work, we propose a relaxation to \sort{} that maps inputs to an alternate 
subset of row stochastic matrices, which we refer to as the \textit{unimodal row stochastic matrices}. 
\begin{definition}[Unimodal Row Stochastic Matrices]
An $n \times n$ matrix is Unimodal Row Stochastic if it
satisfies the following conditions:
\begin{enumerate}
    \item \textbf{Non-negativity:} $U[i, j] \geq 0 \quad\forall i,j \in \{1,2,\ldots,n\}$.
    \item \textbf{Row Affinity:}  $\sum_{j=1}^n U[i,j]=1 \quad\forall i \in \{1,2, \ldots, n\}$.
    \item \textbf{Argmax Permutation:} Let $\mathbf{u}$ denote an $n$-dimensional vector
    with entries such that $u_i = \arg \max_j U[i, j] \quad\forall i \in \{1,2, \ldots, n\}$. Then, $\mathbf{u} \in \mathcal{Z}_n$, i.e., it is a valid permuation.
\end{enumerate}
We denote $\mathcal{U}_n$ as the set of $n \times n$ unimodal row stochastic matrices.
\end{definition}
\def\firstcircle{(0,0) circle (0.25cm)}
\def\secondcircle{(-0.5,0) circle (0.75cm)}
\def\thirdcircle{(0,0) circle (1.5cm)}
\def\fourthcircle{(0.5,0) circle (0.75cm)}
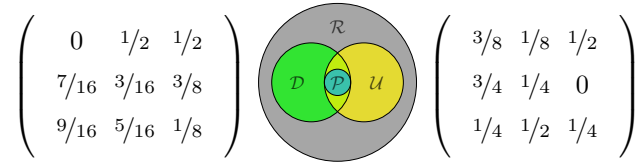
\begin{figure}
\centering
\begin{tikzpicture}
\matrix[matrix of math nodes,
        left delimiter=(,
        right delimiter=),
        nodes in empty cells] (m)
{
0       & \nicefrac{1}{2} & \nicefrac{1}{2}  \\
\nicefrac{7}{16} & \nicefrac{3}{16}  &  \nicefrac{3}{8}     \\
\nicefrac{9}{16} & \nicefrac{5}{16}  &  \nicefrac{1}{8}     \\
};
\end{tikzpicture}
\scalebox{0.7}{
\begin{tikzpicture}
    \begin{scope}[shift={(3cm,-5cm)}, fill opacity=0.7]
        \fill[gray] \thirdcircle;
        \fill[green] \secondcircle;
        \fill[yellow] \fourthcircle;
        \fill[cyan] \firstcircle;
        \draw \firstcircle node[at start] {$\mathcal{P}$};
        \draw \secondcircle node [left] {$\mathcal{D}$};
        \draw \thirdcircle node [above] {};
        \node [label={[anchor=south east, inner sep=0pt]}] at (0pt,30pt) {$\mathcal{R}$};
        \draw \fourthcircle node [right] {$\mathcal{U}$};
    \end{scope}
\end{tikzpicture}
}
\begin{tikzpicture}
\matrix[matrix of math nodes,
        left delimiter=(,
        right delimiter=), 
        nodes in empty cells] (m)
{
\nicefrac{3}{8}       & \nicefrac{1}{8} & \nicefrac{1}{2}  \\
\nicefrac{3}{4} & \nicefrac{1}{4}  &  0   \\
\nicefrac{1}{4} &  \nicefrac{1}{2}   &  \nicefrac{1}{4}     \\
};
\end{tikzpicture}
\caption{\textbf{Center:} Venn Diagram relationships between permutation matrices ($\mathcal{P}$), doubly-stochastic matrices ($\mathcal{D}$), unimodal row stochastic matrices ($\mathcal{U}$), and row stochastic matrices ($\mathcal{R}$). \textbf{Left:} A doubly-stochastic matrix that is not unimodal. \textbf{Right}: A unimodal matrix that is not doubly-stochastic.}\label{fig:venn_diag}
\end{figure}

All row stochastic matrices satisfy the first two conditions.
The third condition is useful for gradient based optimization involving sorting-based losses. The condition provides a straightforward mechanism for extracting a permutation from a unimodal row stochastic matrix via a row-wise $\arg \max$ operation. Figure~\ref{fig:venn_diag} shows the relationships between the different subsets of square matrices.

\paragraph{\name{}.} Our relaxation to the \sort{} operator is based on a standard identity for evaluating the sum of the $k$ largest elements in any input vector. 
\begin{lemma}\label{thm:sum_sort_identity} [Lemma 1 in \citet{ogryczak2003minimizing}]
For an input vector $\mathbf{s} = [s_1, s_2, \ldots, s_n]^T$ that is sorted as $s_{[1]} \geq s_{[2]} \geq \ldots \geq s_{[n]}$, we have the sum of the $k$-largest elements given as: 
\begin{align}
\sum_{i=1}^k s_{[i]} = \min_{\lambda\in \{s_1, s_2, \ldots, s_n\}} \lambda k + \sum_{i=1}^n \max(s_i- \lambda, 0).
\end{align}
\end{lemma}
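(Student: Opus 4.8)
The plan is to prove the identity by bounding the objective $g(\lambda) := \lambda k + \sum_{i=1}^n \max(s_i - \lambda, 0)$ from two sides: an upper bound exhibiting a feasible $\lambda \in \{s_1,\ldots,s_n\}$ that attains the value $\sum_{i=1}^k s_{[i]}$, and a lower bound showing $g(\lambda) \geq \sum_{i=1}^k s_{[i]}$ for \emph{every} real $\lambda$. Because the lower bound holds over all of $\R$ (hence a fortiori over the finite set $\{s_1,\ldots,s_n\}$) while the upper bound produces a minimizer lying inside that set, the two estimates pin the constrained minimum to exactly $\sum_{i=1}^k s_{[i]}$, which is the claim.

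For the lower bound I would fix an arbitrary $\lambda$, discard the $n-k$ smallest summands (each $\max(\cdot,0)$ is nonnegative), and then replace each retained positive part using the elementary inequality $\max(a,0) \geq a$:
\[
g(\lambda) \;\geq\; \lambda k + \sum_{i=1}^k \max(s_{[i]}-\lambda,\,0) \;\geq\; \lambda k + \sum_{i=1}^k (s_{[i]} - \lambda) \;=\; \sum_{i=1}^k s_{[i]}.
\]
This chain is valid no matter how the scores are arranged or whether they are distinct, so it immediately yields $\min_\lambda g(\lambda) \geq \sum_{i=1}^k s_{[i]}$.

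For the upper bound I would simply evaluate $g$ at the candidate $\lambda = s_{[k]}$. Letting $m$ denote the number of indices with $s_i > s_{[k]}$ (so that $s_{[m+1]} = \cdots = s_{[k]}$), the positive part collapses to $\sum_{j=1}^m (s_{[j]} - s_{[k]})$, and a short computation gives $g(s_{[k]}) = (k-m)s_{[k]} + \sum_{j=1}^m s_{[j]} = \sum_{j=1}^k s_{[j]}$, where the $k-m$ boundary terms all equal $s_{[k]}$. Since $s_{[k]} \in \{s_1,\ldots,s_n\}$, this shows the minimum over the finite set is at most $\sum_{i=1}^k s_{[i]}$, completing the argument.

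I expect the only delicate point to be bookkeeping this positive-part sum at $\lambda = s_{[k]}$ when scores are tied, since then the count $m$ of strictly larger elements may be strictly smaller than $k-1$; the computation above shows the tied boundary terms absorb the discrepancy exactly, so no separate case split is required. As a sanity check and an alternative route, one can recognize $g$ as the Lagrangian dual objective of the linear program $\max\{\sum_i x_i s_i : 0 \leq x_i \leq 1,\ \sum_i x_i = k\}$, whose optimum is $\sum_{i=1}^k s_{[i]}$; the piecewise-linear convexity of $g$ (its slope equals $k - |\{i : s_i > \lambda\}|$, which is nondecreasing in $\lambda$) then confirms that the breakpoint $\lambda = s_{[k]}$ is a global minimizer, so restricting the minimization to $\{s_1,\ldots,s_n\}$ loses nothing.
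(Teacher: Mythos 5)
Your proof is correct and follows essentially the same route as the paper's: a lower bound on $\lambda k + \sum_{i=1}^n \max(s_i-\lambda,0)$ valid for every $\lambda$ (dropping nonnegative terms and using $\max(a,0)\geq a$), combined with showing the value is attained at $\lambda = s_{[k]}$. Your extra bookkeeping for ties and the LP-duality remark are fine but not needed; the paper's one-line evaluation at $\lambda=s_{[k]}$ already handles ties since the boundary terms contribute zero to the positive part.
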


The identity in Lemma~\ref{thm:sum_sort_identity} outputs the sum of the top-$k$ elements.  The $k$-th largest element itself can be recovered by taking the difference of the sum of top-$k$ elements and the top-$(k-1)$ elements. 

\begin{corollary}\label{thm:sort_identity}
Let $\mathbf{s} = [s_1, s_2, \ldots, s_n]^T$ be a real-valued vector of length $n$. Let $A_\mathbf{s}$ denote the matrix of absolute pairwise differences of the elements of $\mathbf{s}$ such that $A_{\mathbf{s}}{[i,j]} = \vert s_i - s_j \vert$. The permutation matrix $P_{\sort{}(\mathbf{s})}$ corresponding to $\sort{}(\mathbf{s})$ is given by:
\begin{align}\label{eq:sort_identity}
P_{\sort{}(\mathbf{s})}[i,j]=\begin{cases}
1 \text{ if }j=\mathrm{arg}\max [(n+1-2i)\mathbf{s} - A_{\mathbf{s}}\mathds{1}]\\
0 \text{ otherwise}
\end{cases}
\end{align}
where $\mathds{1}$ denotes the column vector of all ones.
\end{corollary}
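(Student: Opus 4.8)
The plan is to fix a row index $i$ and show that the column achieving the $\arg\max$ in Eq.~\ref{eq:sort_identity} is exactly $z_i$, the original index of the $i$-th largest entry of $\mathbf{s}$ (so that $s_{z_i} = s_{[i]}$); this is precisely the column in which row $i$ of $P_{\sort{}(\mathbf{s})}$ places its single $1$. First I would write out the $j$-th coordinate of the vector inside the $\arg\max$. Since $(A_\mathbf{s}\mathds{1})_j = \sum_{k=1}^n |s_j - s_k|$, this coordinate equals $B_i(s_j)$, where I define $B_i(\lambda) = (n+1-2i)\lambda - \sum_{k=1}^n |\lambda - s_k|$. Because $j \mapsto s_j$ ranges over all entries of $\mathbf{s}$, maximizing the coordinate over $j$ is the same as maximizing $B_i$ over the finite set $\{s_1,\ldots,s_n\}$, and the task reduces to locating the maximizer.

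The key step is to connect $B_i$ to the objective of Lemma~\ref{thm:sum_sort_identity}. Writing $P_m(\lambda) = m\lambda + \sum_{k=1}^n \max(s_k - \lambda, 0)$ for that objective and using $\max(s_k - \lambda, 0) = \tfrac12(|s_k - \lambda| + s_k - \lambda)$ to re-express the sum of absolute values, I would verify the exact relation
\begin{align*}
-\tfrac12 B_i(\lambda) = \tfrac12\big(P_{i-1}(\lambda) + P_i(\lambda)\big) - \tfrac12\sum_{k=1}^n s_k .
\end{align*}
This explains the otherwise mysterious coefficient $n+1-2i$: it is exactly what makes $B_i$ proportional, up to an additive constant, to the average of the top-$(i-1)$ and top-$i$ objectives, since the coefficient of $\lambda$ in $\tfrac12(P_{i-1}+P_i)$ is $i-\tfrac12$. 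Consequently, maximizing $B_i$ over $\{s_1,\ldots,s_n\}$ is equivalent to minimizing $P_{i-1}+P_i$ over the same set.

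To carry out this minimization I would evaluate $P_m$ at the sorted values: a direct count of the terms surviving the $\max$ gives $P_m(s_{[r]}) = (m-r+1)s_{[r]} + \sum_{l=1}^{r-1} s_{[l]}$ in the distinct-entry case. Setting $h(r) = P_{i-1}(s_{[r]}) + P_i(s_{[r]})$ and forming the discrete difference, the partial sums largely cancel and I expect the clean factorization
\begin{align*}
h(r+1) - h(r) = (2i - 2r - 1)\,(s_{[r+1]} - s_{[r]}).
\end{align*}
Since the entries are in descending order, the factor $s_{[r+1]} - s_{[r]}$ is nonpositive (strictly negative when distinct), while $2i - 2r - 1$ is positive for $r \le i-1$ and negative for $r \ge i$; hence $h$ strictly decreases up to $r = i$ and strictly increases thereafter, so its unique minimizer is $r = i$. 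Unwinding the reduction, the $\arg\max$ over $j$ is attained at the index with $s_j = s_{[i]}$, namely $z_i$, which establishes Eq.~\ref{eq:sort_identity}.

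I expect the main obstacle to be the bookkeeping in this last step — both the evaluation of $P_m(s_{[r]})$ and especially the cancellation that yields the factorization of $h(r+1) - h(r)$, since this is where the descending-order sign structure must line up exactly to force the maximizer to $r = i$ rather than to a neighboring rank. The only other subtlety is the treatment of ties, where the row-wise $\arg\max$ and the sorted order are no longer in bijection; this is handled by the stated tie-breaking convention and deferred to Appendix~\ref{app:ties}.
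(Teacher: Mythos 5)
Your proposed proof is correct, and it follows the same overall strategy as the paper's — both arguments hinge on recognizing the $j$-th coordinate of $(n+1-2i)\mathbf{s} - A_{\mathbf{s}}\mathds{1}$ as (a constant minus) the combination $P_{i-1}(\lambda)+P_i(\lambda)$ of the Lemma~\ref{thm:sum_sort_identity} objectives evaluated at $\lambda=s_j$; the paper arrives at the same combination by adding the top-$i$ objective for $\mathbf{s}$ to the reflected objective for $-\mathbf{s}$, which after its change of variables is $P_{i-1}$ up to the additive constant $\sum_k s_k$, matching your identity $-B_i(\lambda)=P_{i-1}(\lambda)+P_i(\lambda)-\sum_k s_k$. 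Where you genuinely diverge is in locating the minimizer of this combination over $\lambda\in\{s_1,\dots,s_n\}$: the paper invokes Lemma~\ref{thm:sum_sort_identity} twice to characterize the minimizing sets of $P_i$ and $P_{i-1}$ as intervals whose intersection is the single point $s_{[i]}$, whereas you bypass the lemma's conclusion altogether, evaluate $h(r)=P_{i-1}(s_{[r]})+P_i(s_{[r]})$ in closed form, and derive the factorization $h(r+1)-h(r)=(2i-2r-1)(s_{[r+1]}-s_{[r]})$, whose sign pattern forces a unique minimum at $r=i$. Both of your computations check out ($P_m(s_{[r]})=(m-r+1)s_{[r]}+\sum_{l=1}^{r-1}s_{[l]}$ and the telescoping of the partial sums are exactly as stated). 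Your route is more computational but more self-contained — it needs only the definition of the objective $P_m$, not the statement of the lemma — and it yields strict discrete unimodality of $h$, hence uniqueness of the $\arg\max$, for free; the paper's route is shorter because it reuses the lemma and a reflection symmetry in place of your bookkeeping. Restricting to distinct entries and deferring ties to Appendix~\ref{app:ties} matches the paper's own treatment.
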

\textit{E.g.}, if we set $i=\lfloor (n+1)/2 \rfloor$ 
then the non-zero entry in the $i$-th row $P_{\sort{}(\mathbf{s})}[i,:]$ corresponds to the element with the minimum sum of (absolute) distance to the other elements. As desired, this corresponds to the median element. The relaxation requires $O(n^2)$ operations to compute $A_\mathbf{s}$,
 as opposed to the $O(n \log n)$ overall complexity for the best known sorting algorithms. In practice however, it is highly parallelizable and can be implemented efficiently on GPU hardware.

The $\mathrm{arg}\max$ operator is non-differentiable which prohibits the direct use of Corollary~\ref{thm:sort_identity} for gradient computation. 
Instead, we propose to replace the $\mathrm{arg}\max$ operator with $\mathrm{soft}\max$ to obtain a continuous relaxation $\widehat{P}_{\sort{}(\textbf{s})}(\tau)$. In particular, the $i$-th row of $\widehat{P}_{\sort{}(\textbf{s})}(\tau)$ is given by:
\begin{align}\label{eq:relaxed_perm}
\widehat{P}_{\sort{}(\mathbf{s})}[i, :](\tau) = \mathrm{soft}\max 	\left[((n+1-2i)\mathbf{s} - A_{\mathbf{s}} \mathds{1})/\tau\right]
\end{align}
where $\tau>0$ is a temperature parameter. Our relaxation is continuous everywhere and differentiable almost everywhere with respect to the elements of $\mathbf{s}$.
Furthermore, we have the following result.

\begin{theorem}\label{thm:convergence}
Let  $\widehat{P}_{\sort{}(\textbf{s})}$ denote the continuous relaxation to the permutation matrix $P_{\sort{}(\textbf{s})}$ for an arbitrary input vector $\textbf{s}$ and temperature $\tau$ defined in Eq.~\ref{eq:relaxed_perm}. Then, we have:
\begin{enumerate}
\item Unimodality: $\forall \tau>0$, $\widehat{P}_{\sort{}(\textbf{s})}$ is a unimodal row stochastic matrix. Further, let $\mathbf{u}$ denote the permutation obtained by applying $\arg \max$ row-wise to $\widehat{P}_{\sort{}(\textbf{s})}$. Then, $\mathbf{u}=\sort{}(\textbf{s})$.
\item Limiting behavior: If we assume that the entries of $\mathbf{s}$ are drawn independently from a distribution that is absolutely continuous w.r.t. the Lebesgue measure in $\mathbb{R}$, then the following convergence holds almost surely:
\begin{align}
\lim_{\tau\to 0^{+}} \widehat{P}_{\sort{}(\mathbf{s})}[i,:](\tau) = P_{\sort{}(\mathbf{s})}[i,:] \quad \forall i \in \{1,2,\ldots, n\}.
\end{align}
\end{enumerate}

\end{theorem}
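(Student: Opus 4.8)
The plan is to treat the two claims separately: the first is an algebraic fact holding for every $\tau > 0$, while the second is an almost-sure limit requiring a measure-theoretic argument. Throughout, write $\mathbf{c}_i = \mathbf{c}_i(\mathbf{s}) = (n+1-2i)\mathbf{s} - A_{\mathbf{s}}\mathds{1} \in \mathbb{R}^n$, so that by Eq.~\ref{eq:relaxed_perm} the $i$-th row is $\widehat{P}_{\sort{}(\mathbf{s})}[i,:](\tau) = \mathrm{softmax}(\mathbf{c}_i/\tau)$, while by Corollary~\ref{thm:sort_identity} the $i$-th row of $P_{\sort{}(\mathbf{s})}$ is the one-hot indicator of $\arg\max_j \mathbf{c}_i[j]$, whose nonzero coordinate is $z_i$. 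For unimodality, non-negativity and the row-sum condition are immediate, since each softmax coordinate is strictly positive and the coordinates sum to one. For the argmax condition I would use that $\mathrm{softmax}$ is strictly order-preserving and that scaling by $1/\tau > 0$ preserves order, so $\arg\max_j \widehat{P}_{\sort{}(\mathbf{s})}[i,j](\tau) = \arg\max_j \mathbf{c}_i[j] = z_i$; collecting these row-wise maximizers gives $\mathbf{u} = \sort{}(\mathbf{s}) \in \mathcal{Z}_n$, which is simultaneously the third condition and the claimed projection. (This step, like Corollary~\ref{thm:sort_identity}, presupposes each $\mathbf{c}_i$ has a unique maximizer, which is granted by the uniqueness Note and shown to hold almost surely below.)

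For the limiting behaviour I would first establish the limit pointwise on the set of inputs with unique row-maximizers, then show its complement is null. Fix $\mathbf{s}$ such that each $\mathbf{c}_i(\mathbf{s})$ attains its maximum at a unique index $j^\star = z_i$. Then for $j \neq j^\star$ we have $(\mathbf{c}_i[j] - \mathbf{c}_i[j^\star])/\tau \to -\infty$ as $\tau \to 0^{+}$, so writing $\mathrm{softmax}(\mathbf{c}_i/\tau)[j^\star] = \big(1 + \sum_{j \neq j^\star} e^{(\mathbf{c}_i[j] - \mathbf{c}_i[j^\star])/\tau}\big)^{-1}$ shows this entry tends to $1$ and every other entry to $0$; this is exactly convergence to $P_{\sort{}(\mathbf{s})}[i,:]$, holding for every row at once.

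The crux, and the step I expect to be the main obstacle, is showing that the set of $\mathbf{s}$ for which some row-maximizer is not unique has Lebesgue measure zero; absolute continuity then upgrades this to probability zero. This bad set is contained in the finite union, over $i$ and pairs $j \neq k$, of the level sets $\{\mathbf{c}_i[j] = \mathbf{c}_i[k]\}$ together with the coordinate ties $\{s_a = s_b\}$ (hyperplanes, hence null). To control a level set I would partition $\mathbb{R}^n$ into the finitely many open order regions on which the ranking of the coordinates of $\mathbf{s}$ is fixed; on each, every $|s_a - s_b|$ becomes a signed linear term, so $\mathbf{c}_i[j] - \mathbf{c}_i[k]$ is affine. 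The worry is that this affine function could vanish identically on a full-dimensional region, producing positive-measure ties. This is ruled out by a short parity computation: the coefficient of $s_j$ in $\mathbf{c}_i[j]$ equals $2(r_j - i)$, where $r_j$ is the rank of $s_j$, whereas the coefficient of $s_j$ in $\mathbf{c}_i[k]$ is $\mathrm{sgn}(s_k - s_j) \in \{\pm 1\}$, so the coefficient of $s_j$ in the difference is an even integer minus $\pm 1$ — odd, hence nonzero. Therefore $\mathbf{c}_i[j] - \mathbf{c}_i[k]$ is non-constant affine on each region, its zero set meets each region in a hyperplane, and the bad set is a finite union of null sets. Getting this coefficient bookkeeping exactly right is the one genuinely delicate part of the argument.
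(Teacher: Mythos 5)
Your proof is correct, and the unimodality part coincides with the paper's argument (positivity and normalization from the softmax, argmax preservation from monotonicity of softmax and positive scaling by $1/\tau$, then Corollary~\ref{thm:sort_identity} to identify the row-wise argmax with $\sort{}(\mathbf{s})$). Where you diverge is in the limiting behavior. The paper disposes of it in two lines by invoking the variational characterization $\mathrm{soft}\max(z/\tau)=\arg\max_{x\in\Delta^{n-1}}\langle x,z\rangle-\tau\sum_i x_i\log x_i$ from \citet{gao2017properties}, whereas you compute the limit directly from $\mathrm{soft}\max(\mathbf{c}_i/\tau)[j^\star]=\bigl(1+\sum_{j\neq j^\star}e^{(\mathbf{c}_i[j]-\mathbf{c}_i[j^\star])/\tau}\bigr)^{-1}$; your version is more elementary and self-contained, at the cost of a few extra lines. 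The one place where you do substantially more work than necessary is the step you flag as the crux: the order-region decomposition and the parity bookkeeping for the coefficients of $\mathbf{c}_i[j]-\mathbf{c}_i[k]$. The computation itself is right (the coefficient of $s_j$ in $\mathbf{c}_i[j]$ is indeed $2(r_j-i)$ and in $\mathbf{c}_i[k]$ is $\mathrm{sgn}(s_k-s_j)$, so the difference is odd and the affine piece is non-constant), but it is not needed: Corollary~\ref{thm:sort_identity} already shows that when the entries of $\mathbf{s}$ are pairwise distinct, the function $\lambda\mapsto\lambda(2i-1-n)+\sum_k|s_k-\lambda|$ has the \emph{unique} minimizer $s_{[i]}$ over $\lambda\in\{s_1,\dots,s_n\}$ (the two interval-valued minimizer sets in its proof intersect in the single point $s_{[i]}$), so each $\mathbf{c}_i$ automatically has a unique maximizer off the finite union of hyperplanes $\{s_a=s_b\}$, which is already Lebesgue-null. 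This is exactly the shortcut the paper takes with the remark that the distributional assumption makes the entries distinct almost surely; your longer route buys nothing additional here, though it does no harm.
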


Unimodality allows for efficient projection of the relaxed permutation matrix $\widehat{P}_{\sort{}(\textbf{s})}$ to the hard matrix $P_{\sort{}(\mathbf{s})}$ via a row-wise $\arg\max$, \eg, for straight-through gradients. 
For analyzing limiting behavior, independent draws ensure that the elements of $\mathbf{s}$ are distinct almost surely. 
The temperature $\tau$ controls the degree of smoothness of our approximation. 
At one extreme, the approximation becomes tighter as the temperature is reduced. 
In practice however, the trade-off is in the variance of these estimates, which is typically lower for larger temperatures.

\section{Stochastic Optimization over Permutations}\label{sec:pl}
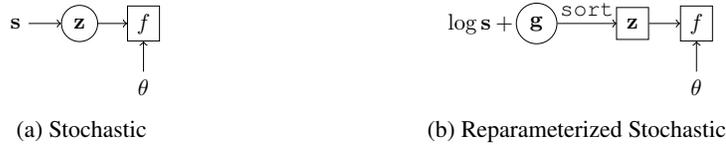
\begin{figure*}[t]
\centering
 \begin{subfigure}[b]{0.45\textwidth}
 \centering
 \scalebox{0.85}{
 \begin{tikzpicture}
 \node[draw=none,fill=none] (c) at (0,1) {$\mathbf{s}$};
 \node[circle] (d) at (1,1) [draw, minimum width=0.5cm,minimum height=0.5cm] {$\mathbf{z}$};
 \node[draw=none,fill=none] (e) at (2,0) {$\theta$};
 \node[rectangle] (f) at (2,1) [draw, minimum width=0.5cm,minimum height=0.5cm] {$f$};
 \foreach \from/\to in {c/d, d/f, e/f}
\draw [->] (\from) -- (\to);
  \end{tikzpicture}
  }
  \caption{Stochastic}\label{fig:stoc_comp_graphs_default}
 \end{subfigure}
 \begin{subfigure}[b]{0.48\textwidth}
 \centering
 \scalebox{0.85}{
 \begin{tikzpicture}
 \node[draw=none,fill=none] (c) at (-0.05,1) {$\log\mathbf{s} $};
  \node[draw=none,fill=none] (p) at (0.4,1) {$\;\; + $};
 \node[circle] (h) at (1,1) [draw, minimum width=0.5cm,minimum height=0.5cm] {$\mathbf{g}$};
 \node[rectangle] (d) at (2.5,1) [draw, minimum width=0.5cm,minimum height=0.5cm] {$\mathbf{z}$};
 \node[draw=none,fill=none] (e) at (3.5,0) {$\theta$};
 \node[rectangle] (f) at (3.5,1) [draw, minimum width=0.5cm,minimum height=0.5cm] {$f$};
 \foreach \from/\to in {d/f, e/f}
\draw [->] (\from) -- (\to);
 \draw [->] (h) -- node [above] {\texttt{sort}} ++ (1.25, 0) (d);
  \end{tikzpicture}
  }
 
  \caption{Reparameterized Stochastic}\label{fig:stoc_comp_graphs_exp_reparam}
 \end{subfigure}
 
 \caption{Stochastic computation graphs with stochastic nodes corresponding to permutations. Squares denote deterministic nodes and circles denote stochastic nodes.}
\label{fig:stoc_comp_graphs}
\end{figure*}

In many scenarios, we would like the ability to express our uncertainty in inferring a permutation \eg, latent variable models with latent nodes corresponding to permutations. Random variables that assume values corresponding to permutations can be represented via stochastic nodes in the stochastic computation graph. For optimizing the parameters of such a graph,
consider the following class of objectives:
\begin{align}\label{eq:learning_obj}
L(\theta, \mathbf{s}) = \mathbb{E}_{q(\mathbf{z}\vert\mathbf{s})}\left[f(P_\mathbf{z}; \theta)\right]
\end{align}
where $\theta$ and $\mathbf{s}$ denote sets of parameters,
$P_\mathbf{z}$ is the permutation matrix corresponding to the permutation $\mathbf{z}$, $q(\cdot)$ is a parameterized distribution over the elements of the symmetric group $\mathcal{Z}_n$, and $f(\cdot)$ is an arbitrary function of interest assumed to be differentiable in $\theta$ and $\mathbf{z}$. 
The SCG is shown in Figure~\ref{fig:stoc_comp_graphs_default}. In contrast to the SCG considered in the previous section (Figure~\ref{fig:stoc_comp_graphs_exp_deterministic}), here we are dealing with a distribution over permutations as opposed to a single (deterministically computed) one.

While such objectives are typically intractable to evaluate exactly since they require summing over a combinatorially large set, we can obtain unbiased estimates efficiently via Monte Carlo. 
Monte Carlo estimates of gradients w.r.t. $\theta$ can be derived simply via linearity of expectation.
However, the gradient estimates w.r.t. $\mathbf{s}$ cannot be obtained directly since the sampling distribution depends on $\mathbf{s}$. The \textit{REINFORCE gradient estimator}~\citep{glynn1990likelihood,williams1992simple,fu2006gradient} uses the fact that $\nabla_\mathbf{s} q(\mathbf{z}\vert\mathbf{s})=q(\mathbf{z}\vert\mathbf{s})\nabla_\mathbf{s} \log q(\mathbf{z}\vert\mathbf{s})$ to derive the following Monte Carlo gradient estimates:
\begin{align}
\nabla_\mathbf{s} L(\theta, \mathbf{s}) = \mathbb{E}_{q(\mathbf{z}\vert\mathbf{s})}\left[f(P_\mathbf{z}; \theta) \nabla_\mathbf{s} \log q(\mathbf{z}\vert\mathbf{s}) \right] + \mathbb{E}_{q(\mathbf{z}\vert\mathbf{s})}\left[\nabla_\mathbf{s} f(P_\mathbf{z}; \theta) \right].
\end{align}
 
 \subsection{Reparameterized gradient estimators for PL distributions}
REINFORCE gradient estimators typically suffer from high variance~\citep{schulman2015gradient,glasserman2013monte}. \textit{Reparameterized samplers} provide an alternate gradient estimator by expressing samples from a distribution as a \emph{deterministic} function of its parameters and a fixed source of randomness~\citep{kingma-iclr2014,rezende-icml2014,titsias2014doubly}. 
Since the randomness is from a fixed distribution, Monte Carlo gradient estimates can be derived by pushing the gradient operator inside the expectation (via linearity).
In this section, we will derive a reparameterized sampler and gradient estimator for the Plackett-Luce (PL) family of distributions.

Let the score $s_i$ for an item $i \in \{1,2,\ldots,n\}$ be an unobserved random variable drawn from some underlying score distribution~\citep{thurstone1927law}.
Now for each item, we draw a score from its corresponding score distribution. Next, we generate a permutation by applying the deterministic \sort{} operator to these $n$ \emph{randomly sampled} scores. 
Interestingly, prior work has shown that the resulting distribution over permutations corresponds to a PL distribution if and only if the scores are sampled independently from Gumbel distributions with identical scales.
\begin{proposition}\label{thm:gumbel_pl}[adapted from~\citet{yellott1977relationship}] 
Let $\mathbf{s}$ be a vector of scores for the $n$ items. For each item $i$, sample $g_i \sim \mathtt{Gumbel}(0, \beta)$ independently with zero mean and a fixed scale $\beta$. Let $\mathbf{\tilde{s}}$ denote the vector of Gumbel perturbed log-scores with entries such that $\tilde{s}_i = \beta \log s_i + g_i$. Then:
\begin{align}
q(\tilde{s}_{z_1} \geq \dots \geq \tilde{s}_{z_n}) &= \frac{s_{z_1}}{Z} \frac{s_{z_2}}{Z- s_{z_1}} \cdots  \frac{s_{z_n}}{Z- \sum_{i=1}^{n-1}s_{z_i}}.
\end{align}
\end{proposition}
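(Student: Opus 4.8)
The plan is to reduce the claim to a classical fact about competing exponential ``clocks'' by means of an order-reversing change of variables. First I would fix the parametrization: reading $g_i \sim \mathrm{Gumbel}(0,\beta)$ as location $0$ and scale $\beta$, the perturbed log-score $\tilde{s}_i = \beta \log s_i + g_i$ is $\mathrm{Gumbel}(\beta \log s_i, \beta)$, with CDF $F_i(x) = \exp(-\exp(-(x - \beta \log s_i)/\beta))$. The crucial step is the substitution $Y_i := \exp(-\tilde{s}_i/\beta)$: a one-line CDF computation gives $P(Y_i \le t) = 1 - \exp(-s_i t)$ for $t \ge 0$, so $Y_i \sim \mathrm{Exponential}(s_i)$ with rate $s_i$, and the $Y_i$ remain mutually independent. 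Since $x \mapsto \exp(-x/\beta)$ is strictly decreasing (as $\beta > 0$), the descending event $\{\tilde{s}_{z_1} \ge \cdots \ge \tilde{s}_{z_n}\}$ coincides exactly with the ascending event $\{Y_{z_1} \le \cdots \le Y_{z_n}\}$. Note that $\beta$ cancels, consistent with the target PL law depending only on the scores $\mathbf{s}$.

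Second, I would evaluate the probability that independent exponentials with rates $s_i$ arrive in a prescribed increasing order, which is the standard exponential-race identity $P(Y_{z_1} \le \cdots \le Y_{z_n}) = \prod_{k=1}^n s_{z_k} / \big(\sum_{j=k}^n s_{z_j}\big)$. I would prove this by induction on $n$ using memorylessness: the probability that $Y_{z_1}$ is the overall minimum is $s_{z_1}/\sum_i s_i = s_{z_1}/Z$ (integrate $\int_0^\infty s_{z_1} e^{-s_{z_1} t} \prod_{i \ne z_1} e^{-s_i t}\, dt$), and conditioned on $Y_{z_1} = t$ being that minimum, the overshoots $Y_i - t$ for $i \ne z_1$ are again independent exponentials with the same rates, reducing the remaining ordering to the same problem on $n-1$ items with total rate $Z - s_{z_1}$. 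Telescoping, the $k$-th factor becomes $s_{z_k}/(Z - \sum_{j<k} s_{z_j})$, which is precisely the $k$-th factor of the claimed PL mass function.

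Two loose ends remain. Because each $\tilde{s}_i$ is absolutely continuous, ties occur with probability zero, so the distinction between strict and non-strict inequalities is immaterial and the $n!$ ordering events partition the sample space up to a null set; this also makes the conditioning in the induction well-defined. The main obstacle is making the memorylessness/independence step fully rigorous — specifically, justifying that after conditioning on the identity and the value of the current minimum, the residual lifetimes are exactly independent $\mathrm{Exponential}(s_i)$ variables, independent of the minimizer — which I would handle either by integrating the explicit joint density over the ordered region $\{y_{z_1} \le \cdots \le y_{z_n}\}$ or by invoking the lack-of-memory property of the exponential coordinate-wise. Everything else is a routine telescoping of normalization constants.
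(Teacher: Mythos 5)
Your proof is correct and follows essentially the same route as the paper's: both arguments pass through independent exponential random variables with rates $s_i$ (your substitution $Y_i = \exp(-\tilde{s}_i/\beta)$ is exactly the inverse of the paper's order-reversing map $g(x) = -\beta\log x$), establish the ordering probability via the minimum-of-exponentials fact and a memorylessness induction, and telescope the normalization constants. The only cosmetic difference is the direction of the change of variables — the paper starts from exponentials and maps to Gumbels at the end, while you map the Gumbels to exponentials at the outset.
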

For ease of presentation, we assume $\beta=1$ in the rest of this work. 
Proposition~\ref{thm:gumbel_pl} provides a method for sampling from PL distributions with parameters $\mathbf{s}$ by adding Gumbel perturbations to the log-scores and applying the \sort{} operator to the perturbed log-scores. 
This procedure can be seen as a reparameterization trick that expresses a sample from the PL distribution as a deterministic function of the scores and a fixed source of randomness (Figure~\ref{fig:stoc_comp_graphs_exp_reparam}). 
Letting $\mathbf{g}$ denote the vector of i.i.d. Gumbel perturbations, we can express the objective in Eq.~\ref{eq:learning_obj} as:
\begin{align}
    L(\theta, \mathbf{s}) = \mathbb{E}_{\mathbf{g}}\left[f(P_{\sort{}(\log \mathbf{s}+ \mathbf{g})}; \theta)\right].
\end{align}
While the reparameterized sampler removes the dependence of the expectation on the parameters $\mathbf{s}$, it introduces a \sort{} operator in the computation graph such that the overall objective is non-differentiable in $\mathbf{s}$. In order to obtain a differentiable surrogate, 
we approximate the objective based on the \name{} relaxation to the \sort{} operator:
\begin{align}
    \mathbb{E}_{\mathbf{g}}\left[f(P_{\sort{}(\log \mathbf{s}+ \mathbf{g})}; \theta)\right] \approx \mathbb{E}_{\mathbf{g}}\left[f(\widehat{P}_{\sort{}(\log \mathbf{s}+ \mathbf{g})}; \theta)\right] := \widehat{L}(\theta, \mathbf{s}).
\end{align}
Accordingly, we get the following reparameterized gradient estimates for the approximation:
\begin{align}
    \nabla_\mathbf{s}\widehat{L}(\theta, \mathbf{s})=\mathbb{E}_{\mathbf{g}}\left[\nabla_\mathbf{s} f(\widehat{P}_{\sort{}(\log \mathbf{s}+ \mathbf{g})}; \theta)\right]
\end{align}
which can be estimated efficiently via Monte Carlo because the expectation is with respect to a distribution that does not depend on $\mathbf{s}$.
\section{Discussion and Related Work}\label{sec:related}

The problem of learning to rank documents based on relevance has been studied extensively in the context of information retrieval.
In particular, \emph{listwise} approaches learn functions that map objects to scores. Much of this work concerns the PL distribution: the RankNet algorithm \citep{burges2005learning} can be interpreted as maximizing the PL likelihood of pairwise comparisons between items, while the ListMLE ranking algorithm in \citet{xia2008listwise} extends this with a loss that maximizes the PL likelihood of ground-truth permutations directly. The differentiable \textit{pairwise} approaches to ranking, such as \citet{rigutini2011sortnet}, learn to approximate the comparator between pairs of objects. Our work considers a generalized setting where sorting based operators can be inserted anywhere in computation graphs to extend traditional pipelines \eg, kNN.

Prior works have proposed relaxations of permutation matrices to the Birkhoff polytope, which is defined as the convex hull of the set of permutation matrices a.k.a. the set of doubly-stochastic matrices.
A doubly-stochastic matrix is a permutation matrix iff it is orthogonal and continuous relaxations based on these matrices have been used previously for solving NP-complete problems such as seriation and graph matching~\citep{fogel2013convex,fiori2013robust,lim2014sorting}.
\citet{adams2011ranking} proposed the use of the Sinkhorn operator to map any \textit{square matrix} to the Birkhoff polytope. 
They interpret the resulting doubly-stochastic matrix as the marginals of a distribution over permutations.
\citet{mena2018learning} propose an alternate method where the square matrix defines a latent distribution over the doubly-stochastic matrices themselves. 
These distributions can be sampled from by adding elementwise Gumbel perturbations.
\citet{linderman2017reparameterizing} propose a \textit{rounding} procedure that uses the Sinkhorn operator to directly sample matrices near the Birkhoff polytope. 
Unlike \citet{mena2018learning}, the resulting distribution over matrices has a tractable density. 
In practice, however, the approach of \cite{mena2018learning} performs better and will be the main baseline we will be comparing against in our experiments in Section~\ref{sec:experiments}.

As discussed in Section~\ref{sec:sorting}, \name{} maps permutation matrices to the set of unimodal row-stochastic matrices. 
For the stochastic setting, the PL distribution permits efficient sampling, exact and tractable density estimation, making it an attractive choice for several applications, \eg, variational inference over latent permutations.
Our reparameterizable sampler, while also making use of the Gumbel distribution, is based on a result unique to the PL distribution (Proposition~\ref{thm:gumbel_pl}).  

The use of the Gumbel distribution for defining continuous relaxations to discrete distributions was first proposed concurrently by \citet{jang2016categorical} and \citet{maddison2016concrete} for categorical variables, referred to as Gumbel-Softmax. The number of possible permutations grow factorially with the dimension, and thus any distribution over $n$-dimensional permutations can be equivalently seen as a distribution over $n!$ categories. Gumbel-softmax does not scale to a combinatorially large number of categories~\citep{kim2016exact,mussmann2017fast}, necessitating the use of alternate relaxations, such as the one considered in this work.

\section{Experiments}\label{sec:experiments}
We refer to the two approaches proposed in Sections~\ref{sec:sorting}, \ref{sec:pl} as Deterministic \name{} and Stochastic \name{}, respectively.
For additional hyperparameter details and analysis, see Appendix~\ref{app:exp}.
\subsection{Sorting handwritten numbers}

\tikzstyle{decision} = [diamond, draw, fill=blue!20, 
    text width=12em, text centered, rounded corners, minimum height=1em]
\tikzstyle{block} = [rectangle, draw, fill=gray!20, 
    text width=1em, text centered, rounded corners, minimum height=1em]
\tikzstyle{block2} = [rectangle, draw, fill=gray!20, 
    text width=7.5em, text centered, rounded corners, minimum height=1em]
\tikzstyle{block3} = [rectangle, draw, fill=red!20, 
    text width=1em, text centered, rounded corners, minimum height=1em]
\tikzstyle{param} = [rectangle, draw, fill=green!20, 
    text width=2em, text centered, rounded corners, minimum height=4em]
\tikzstyle{input} = [rectangle, draw, fill=blue!20, 
    text width=2em, text centered, rounded corners, minimum height=2em]
\tikzstyle{conv} = [rectangle, draw, fill=yellow!20,
    text width=6em, text centered, rounded corners, minimum height=3em]
\tikzstyle{bloss} = [rectangle, draw, fill=red!20,
    text width=12em, text centered, rounded corners, minimum height=3em]
\tikzstyle{bsort} = [rectangle, draw, fill=yellow!20,
    text width=8em, text centered, rounded corners, minimum height=3em]

\tikzstyle{diffsort} = [rectangle, draw, fill=red!20, 
    text width=6em, text centered, rounded corners, minimum height=4em]
\tikzstyle{line} = [draw, -latex']
\tikzstyle{cloud} = [draw, ellipse,fill=red!20, node distance=3cm,
    minimum height=2em]
\begin{figure}[t]
\centering
\scalebox{0.75}{
\begin{tikzpicture}[node distance = 0.75cm, auto]
\node[inner sep=0pt] (seq) at (-2,-1.5)
    {\includegraphics[width=.2\textwidth]{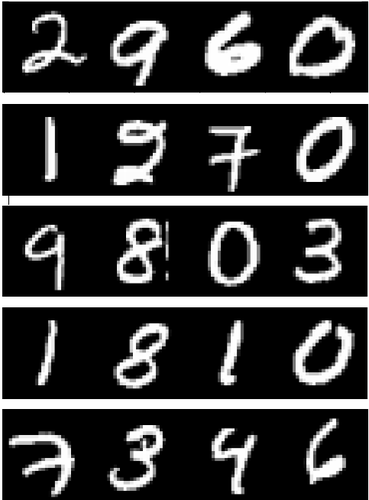}};
    
	\node [] (x) {$\color{blue}\bx_1$};
    \node [below of=x] (n1) {$\bx_2$};
    \node [below of=n1] (n2) {$\bx_3$};
    \node [below of=n2] (nd) {$\bx_4$};
    \node [below of=nd] (nk) {$\bx_5$};
    
    \node [conv, right of=seq, node distance=4cm] (conv1) {$s_i=\text{CNN}(\bx_i$)};
    
	\path [line] (x) -- (conv1);
    \path [line] (n1) -- (conv1);
    \path [line] (n2) -- (conv1);
    \path [line] (nd) -- (conv1);
    \path [line] (nk) -- (conv1);

    \node [block, right of=conv1, node distance=2cm] (e2) {$s_3$};

    \node [block, above of=e2] (e1) {$s_2$}; 
    \node [block, above of=e1] (e) {$s_1$};
    \node [block, below of=e2] (ed) {$s_4$};
    \node [block, below of=ed] (ek) {$s_5$};
    
    \path [line] (conv1) -- (e);
    \path [line] (conv1) -- (e1);
    \path [line] (conv1) -- (e2);
    \path [line] (conv1) -- (ed);
    \path [line] (conv1) -- (ek);

    \node [bsort, right of=e2, node distance=2.5cm] (sortnet) {$\bz=\text{\name{}}(\mathbf{s}$)};
    
    \path [line] (e) -- (sortnet);
    \path [line] (e1) -- (sortnet);
    \path [line] (e2) -- (sortnet);
    \path [line] (ed) -- (sortnet);
    \path [line] (ek) -- (sortnet);

    \node [bloss, above right of=sortnet, node distance=2cm] (loss1) { \textbf{Task 1:} Sorting \\ Loss($[3, 5, 1, 4, 2]^T, \bz$) };

     \node [bsort, below right of=sortnet, node distance=2cm] (conv2) {$\hat{y} = \text{CNN}(\hat{\bx}$)};
       \node [bloss, right of=conv2, node distance=4cm] (loss2) { \textbf{Task 2:} Median Regression\\ Loss($\textcolor{blue}{2960}, \hat{y}$)};
    \node[above right of=conv2, node distance=1cm] {$\hat{\bx}=\bx_{\bz[3]}$} (7.75,-1.25);
    \path [line] (sortnet) -- (loss1);
    \path [line] (sortnet) -- (conv2);
    \path [line] (conv2) -- (loss2); 
\end{tikzpicture}
}
\caption{Sorting and quantile regression. The model is trained to sort sequences of $n=5$ large-MNIST images $\textcolor{blue}{\bx_1}, \bx_2, \ldots, \bx_5$ (\textbf{Task 1}) and regress the median value (\textbf{Task 2}). In the above example, the ground-truth permutation that sorts the input sequence from largest to smallest is 
$[3, 5, 1, 4, 2]^T$
, 9803 being the largest and 1270 the smallest. Blue illustrates the true median image $\textcolor{blue}{\bx_1}$ with ground-truth sorted index $3$ and value $\textcolor{blue}{2960}$. }\label{fig:qr_sorting}
\end{figure}

\textbf{Dataset.} We first create the \textit{large-MNIST} dataset, which extends the MNIST dataset of handwritten digits. The dataset consists of multi-digit images, each a concatenation of 4 randomly selected individual images from MNIST, \eg, \includegraphics[height=\fontcharht\font`\B]{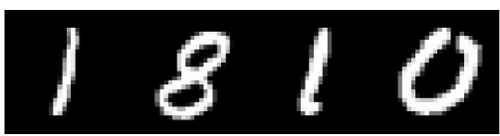} is one such image in this dataset. Each image is associated with a real-valued label, which corresponds to its concatenated MNIST labels, \eg, the label of \includegraphics[height=\fontcharht\font`\B]{figs/large_mnist_single.png} is $1810$. Using the large-MNIST dataset, we finally create a dataset of sequences. Every sequence is this dataset consists of $n$ randomly sampled large-MNIST images.

\textbf{Setup.} Given a dataset of sequences of large-MNIST images, our goal is to learn to predict the permutation that sorts the labels of the sequence of images, given a training set of ground-truth permutations. Figure~\ref{fig:qr_sorting} (Task 1) illustrates this task on an example sequence of $n=5$ large-MNIST images.
This task is a challenging extension of the one considered by \citet{mena2018learning} in sorting scalars, since it involves learning the semantics of high-dimensional objects prior to sorting.  A good model needs to learn to dissect the individual digits in an image, rank these digits, and finally, compose such rankings based on the digit positions within an image. The available supervision, in the form of the ground-truth permutation, is very weak compared to a classification setting that gives direct access to the image labels.

\textbf{Baselines.} All baselines use a CNN that is shared across all images in a sequence to map each large-MNIST image to a feature space.
The \textit{vanilla row-stochastic (RS)} baseline concatenates the CNN representations for $n$ images into a single vector that is fed into a multilayer perceptron that outputs $n$ multiclass predictions of the image probabilities for each rank. %
The \textit{Sinkhorn} and \textit{Gumbel-Sinkhorn} baselines, as discussed in Section~\ref{sec:related}, use the Sinkhorn operator to map the stacked CNN representations for the $n$ objects into a doubly-stochastic matrix. For all methods, we minimized the cross-entropy loss between the predicted matrix and the ground-truth permutation matrix.

\textbf{Results.} Following \citet{mena2018learning}, our evaluation metric is the the proportion of correctly predicted permutations on a test set of sequences. Additionally, we evaluate the proportion of individual elements ranked correctly. Table~\ref{tab:sorting_acc} demonstrates that the approaches based on the proposed sorting relaxation significantly outperform the baseline approaches for all $n$ considered. The performance of the deterministic and stochastic variants are comparable. The vanilla RS baseline performs well in ranking individual elements, but is not good at recovering the overall square matrix.

We believe the poor performance of the Sinkhorn baselines is partly because these methods were designed and evaluated for \textit{matchings}. 
Like the output of \sort, matchings can also be represented as permutation matrices. 
However, distributions over matchings need not satisfy Luce's choice axiom or imply a total ordering, which could explain the poor performance on the tasks considered.

\begin{table}[t]
\caption{Average sorting accuracy on the test set. First value is proportion of permutations correctly identified; value in parentheses is the proportion of individual element ranks correctly identified.}
  \label{tab:sorting_acc}
  \centering
  \scriptsize
  \begin{tabular}{|c|c|c|c|c|c|}
    \toprule
	Algorithm & $n=3$ & $n=5$ & $n=7$ & $n=9$ & $n=15$ \\
    \midrule
    Vanilla RS & 0.467 (0.801) & 0.093 (0.603) & 0.009 (0.492) & 0. (0.113) & 0. (0.067) \\
    
    Sinkhorn & 0.462 (0.561) & 0.038 (0.293) & 0.001 (0.197) & 0. (0.143) & 0. (0.078) \\
    
    Gumbel-Sinkhorn & 0.484 (0.575) & 0.033 (0.295) & 0.001 (0.189) & 0. (0.146) & 0. (0.078) \\
    
    \midrule
    
    Deterministic \name{} & \bf 0.930 (0.951) & \bf 0.837 (0.927) &  0.738 \bf (0.909) &  \bf 0.649 (0.896) & 0.386 (0.857) \\
    
    Stochastic \name{} & 0.927 (0.950) & 0.835 (0.926) & \bf 0.741 (0.909) & 0.646 (0.895) & \bf 0.418 (0.862) \\
    \bottomrule
  \end{tabular}
\end{table}

\subsection{Quantile regression}

\textbf{Setup.} In this experiment, we extend the sorting task to regression. Again, each sequence contains $n$ large-MNIST images, and the regression target for each sequence is the $50$-th quantile (\ie, the median) of the $n$ labels of the images in the sequence. 
Figure~\ref{fig:qr_sorting} (Task 2) illustrates this task on an example sequence of $n=5$ large-MNIST images, where the goal is to output the third largest label.
The design of this task highlights two key challenges since it explicitly requires learning both a suitable representation for sorting high-dimensional inputs and a secondary function that approximates the label itself (regression). Again, the supervision available in the form of the label of only a single image at an arbitrary and unknown location in the sequence is weak.

\textbf{Baselines.} In addition to \textit{Sinkhorn} and \textit{Gumbel-Sinkhorn}, we design two more baselines. The \textit{Constant} baseline always returns the \textit{median} of the full range of possible outputs, ignoring the input sequence. This corresponds to $4999.5$ since we are sampling large-MNIST images uniformly in the range of four-digit numbers. The \textit{vanilla neural net (NN)} baseline directly maps the input sequence of images to a real-valued prediction for the median. 

\begin{table}[t]
\caption{Test mean squared error ($\times 10^{-4}$) and $R^2$ values (in parenthesis) for quantile regression.}
  \label{tab:quantile_l2}
  \centering
  \scriptsize
\begin{tabular}{|c|c|c|c|}

    \toprule
    Algorithm & $n=5$ & $n=9$ & $n=15$  \\
    \midrule
    Constant (Simulated) & 356.79 (0.00) & 227.31 (0.00) & 146.94 ( 0.00) \\
    Vanilla NN & 1004.70  (0.85)  &  699.15 (0.82) & 562.97 (0.79)\\
    
    Sinkhorn & 343.60 (0.25) & 231.87 (0.19) & 156.27 (0.04) \\
    
    Gumbel-Sinkhorn & 344.28 (0.25) & 232.56 (0.23)  & 157.34 (0.06) \\
     \midrule
      Deterministic \name{}  & 45.50 (\textbf{0.95}) & 34.98 (\textbf{0.94})  & {34.78} (\textbf{0.92})  \\
      Stochastic \name{}  & \textbf{33.80} (0.94) & \textbf{31.43} (0.93)  & \textbf{29.34} (0.90)  \\
        \bottomrule
  \end{tabular}
\end{table}

\textbf{Results.} Our evaluation metric is the mean squared error (MSE) and $R^2$ on a test set of sequences. Results for $n=\{5,9,15\}$ images are shown in Table~\ref{tab:quantile_l2}. 
The Vanilla NN baseline while incurring a large MSE, is competitive on the $R^2$ metric. The other baselines give comparable performance on the MSE metric. The proposed \name{} approaches outperform the competing methods on both the metrics considered. The stochastic \name{} approach is the consistent best performer on MSE, while the deterministic \name{} is slightly better on the $R^2$ metric.

\subsection{End-to-end, differentiable $k$-Nearest Neighbors}

\tikzstyle{decision} = [diamond, draw, fill=blue!20, 
    text width=12em, text centered, rounded corners, minimum height=1em]
\tikzstyle{block} = [rectangle, draw, fill=green!20, 
    text width=1em, text centered, rounded corners, minimum height=1em]
\tikzstyle{block2} = [rectangle, draw, fill=gray!20, 
    text width=7.5em, text centered, rounded corners, minimum height=1em]
\tikzstyle{block3} = [rectangle, draw, fill=cyan!20, 
    text width=1em, text centered, rounded corners, minimum height=1em]
\tikzstyle{param} = [rectangle, draw, fill=green!20, 
    text width=2em, text centered, rounded corners, minimum height=4em]
\tikzstyle{input} = [rectangle, draw, fill=blue!20, 
    text width=2em, text centered, rounded corners, minimum height=2em]
\tikzstyle{conv} = [rectangle, draw, fill=yellow!20,
    text width=6em, text centered, rounded corners, minimum height=3em]
\tikzstyle{bloss} = [rectangle, draw, fill=red!20,
    text width=11.2em, text centered, rounded corners, minimum height=3em]
\tikzstyle{bsort} = [rectangle, draw, fill=yellow!20,
    text width=8em, text centered, rounded corners, minimum height=3em]

\tikzstyle{diffsort} = [rectangle, draw, fill=red!20, 
    text width=6em, text centered, rounded corners, minimum height=4em]
\tikzstyle{line} = [draw, -latex']
\tikzstyle{cloud} = [draw, ellipse,fill=red!20, node distance=3cm,
    minimum height=2em]
\begin{figure}[t]
\centering
\scalebox{0.75}{
\begin{tikzpicture}[node distance = 0.75cm, auto]
	\node [input] (x) {$\bx_0$};
    \node [block3, below of=x] (n1) {$\bx_1$};
    \node [block3, below of=n1] (n2) {$\bx_2$};
    \node [block3, below of=n2] (nd) {$\cdots$};
    \node [block3, below of=nd] (nk) {$\bx_n$};
    
    \node [conv, right of=n2, node distance=2cm] (conv1) {$\be_i=h_\phi(\bx_i$)};
    
	\path [line] (x) -- (conv1);
    \path [line] (n1) -- (conv1);
    \path [line] (n2) -- (conv1);
    \path [line] (nd) -- (conv1);
    \path [line] (nk) -- (conv1);

    \node [block, right of=conv1, node distance=2cm] (e2) {$\be_2$};

    \node [block, above of=e2] (e1) {$\be_1$}; 
    \node [block, above of=e1] (e) {$\be_0$};
    \node [block, below of=e2] (ed) {$\cdots$};
    \node [block, below of=ed] (ek) {$\be_n$};
    
    \path [line] (conv1) -- (e);
    \path [line] (conv1) -- (e1);
    \path [line] (conv1) -- (e2);
    \path [line] (conv1) -- (ek);
    
    \node [block2, right of=e1, node distance=3.5cm] (d1) {$s_1=-\Vert \be_1 - \be_0 \Vert_2$}; 
    \node [block2, right of=e2, node distance=3.5cm] (d2) {$s_2=-\Vert \be_2 -\be_0 \Vert_2$}; 
    \node [block2, right of=ed, node distance=3.5cm] (dd) {$\cdots$}; 
    \node [block2, right of=ek, node distance=3.5cm] (dn) {$s_n=-\Vert \be_n - \be_0 \Vert_2$}; 

	\path [line] (e) -- (d1);
    \path [line] (e) -- (d2);
    \path [line] (e) -- (dn);
    \path [line] (e1) -- (d1);
    \path [line] (e2) -- (d2);
    \path [line] (ek) -- (dn);
    
    \node [bsort, right of=d2, node distance=3.5cm] (sortnet) {$\bz=\text{\name{}}(\mathbf{s}$)};
    
    \path [line] (d1) -- (sortnet);
    \path [line] (d2) -- (sortnet);
    \path [line] (dn) -- (sortnet);

    \node [bloss, right of=sortnet, node distance=4.75cm] (loss) {$\ell_{\rm{kNN}}(P_{\vz}, y_0, y_{1}, y_{2}, \dots, y_{n})$};
    \node [input, above of=loss, node distance=1.5cm] (y) {$y_0$};
    
    \draw[decoration={brace,mirror,raise=2.5pt},decorate]
  (12.5,-1.5) -- node[right=2pt] {top-$k$} (12.5,-1);
    \path [line] (sortnet) -- (loss);
    \path [line] (y) -- (loss); 
\end{tikzpicture}
}
\caption{Differentiable kNN. The model is trained such that the representations $\be_i$ for the training points $\{\bx_1, \ldots, \bx_n\}$ that have the same label $y_0$ as $\bx_0$ are closer to $\be_0$  (included in top-$k$) than others.}\label{fig:knn}
\end{figure}
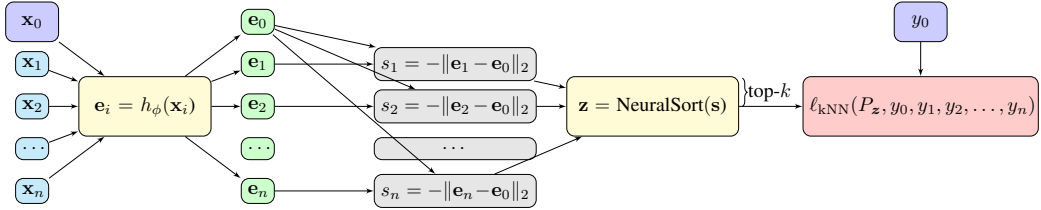
\textbf{Setup.} In this experiment, we design a fully differentiable, end-to-end $k$-nearest neighbors (kNN) classifier. Unlike a standard kNN classifier which computes distances between points in a predefined space, we learn a representation of the data points before evaluating the $k$-nearest neighbors. 

We are given access to a dataset $\D$ of ($\bx, y$) pairs of standard input data and their class labels respectively.
The differentiable kNN algorithm consists of two hyperparameters: the number of training neighbors $n$, the number of top candidates $k$, and the sorting temperature $\tau$.
Every sequence of items here consists of a query point $\bx$ and a randomly sampled subset of $n$ candidate nearest neighbors from the training set, say $\{\bx_1,\bx_2,\dots,\bx_n\}$. 
In principle, we could use the entire training set (excluding the query point) as candidate points, but this can hurt the learning both computationally and statistically.
The query points are randomly sampled from the train/validation/test sets as appropriate but the nearest neighbors are always sampled from the training set.
The loss function optimizes for a representation space $h_\phi(\cdot)$ (\eg, CNN) such that the top-$k$ candidate points with the minimum Euclidean distance to the query point in the representation space have the same label as the query point. 
Note that at test time, once the representation space $h_\phi$ is learned, we can use the entire training set as the set of candidate points, akin to a standard kNN classifier.
Figure~\ref{fig:knn} illustrates the proposed algorithm.

Formally, for any datapoint $\bx$, let $\vz$ denote a permutation of the $n$ candidate points. The uniformly-weighted kNN loss, denoted as $\ell_{\text{kNN}}(\cdot)$, can be written as follows:
\begin{align}\label{eq:knn}
    \ell_{\text{kNN}}(\widehat{P}_{\bz}, y, y_1, y_2, \dots, y_n) = - \frac{1}{k} \sum_{j=1}^k \sum_{i=1}^n \ind(y_i = y) \widehat{P}_{\bz}[i, j]
\end{align}
where $\{y_1, y_2, \dots, y_n\}$ are the labels for the candidate points.
Note that when $\widehat{P}_{\bz}$ is an exact permutation matrix (i.e., temperature $\tau \to 0$), this expression is exactly the negative of the fraction of $k$ nearest neighbors that have the same label as $\bx$.

Using Eq.~\ref{eq:knn}, the training objectives for Deterministic and Stochastic \name{} are given as:
\begin{align}
        \text{Deterministic:}\quad & \min_{\phi}\frac{1}{\vert \D \vert}\sum_{(\bx, y)\in \D} \ell_{\text{kNN}}(\widehat{P}_{\sort(\bs)}, y, y_1, \dots, y_n)
        \end{align}
        \begin{align}
        \text{Stochastic:}\quad&\min_{\phi}\frac{1}{\vert \D \vert}\sum_{(\bx, y)\in \D} \ex_{\bz \sim q(\bz\vert \bs)} \left[
        \ell_{\text{kNN}}(\widehat{P}_{\bz}, y, y_1, y_2, \dots, y_n)
        \right]\\
        &\text{where each entry of $\bs$ is given as } s_j = -\Vert h_\phi (\bx) - h_\phi(\bx_j)\Vert_2^2. \nonumber
\end{align}

\begin{table}[t]
  \caption{Average test kNN classification accuracies from $n$ neighbors for best value of $k$.}
  \label{tab:knn}
  \centering
  \scriptsize
  \begin{tabular}{|c|c|c|c|}
    \toprule
	Algorithm & MNIST & Fashion-MNIST & CIFAR-10\\
	\midrule
    kNN & 97.2\% & 85.8\%  & 35.4\% \\
    kNN+PCA & 97.6\% & 85.9\%  & 40.9\%  \\
    kNN+AE & 97.6\% & 87.5\%  & 44.2\%\\ 
    \midrule
    kNN + Deterministic \name{} & \textbf{99.5\%} & \textbf{93.5\%} & \textbf{90.7\%} \\ 
    kNN + Stochastic \name{} & 99.4\%  & 93.4\% & 89.5\% \\
    \midrule
    CNN (w/o kNN) & 99.4\% & 93.4\% & \textbf{95.1\%} \\
        \bottomrule
  \end{tabular}
\end{table}

\textbf{Datasets.} We consider three benchmark datasetes: MNIST dataset of handwritten digits, Fashion-MNIST dataset of fashion apparel, and the CIFAR-10 dataset of natural images (no data augmentation) with the canonical splits for training and testing. 

\textbf{Baselines.} We consider kNN baselines that operate in three standard representation spaces: the canonical pixel basis, the basis specified by the top $50$ principal components (PCA), an autonencoder (AE). Additionally, we experimented with $k=1,3,5,9$ nearest neighbors and across two distance metrics: uniform weighting of all $k$-nearest neighbors and weighting nearest neighbors by the inverse of their distance. 
For completeness, we trained a CNN with the same architecture as the one used for \name{} (except the final layer) using the cross-entropy loss. 

\textbf{Results.} We report the classification accuracies on the standard test sets in Table~\ref{tab:knn}. On both datasets, the differentiable kNN classifier outperforms all the baseline kNN variants including the convolutional autoencoder approach. The performance is much closer to the accuracy of a standard CNN.

\section{Conclusion}
In this paper, we proposed \name{}, a continuous relaxation of the sorting operator to the set of unimodal row-stochastic matrices. 
Our relaxation facilitates gradient estimation on any computation graph involving a \sort{} operator. Further, we derived a reparameterized gradient estimator for the Plackett-Luce distribution for efficient stochastic optimization over permutations. On three illustrative tasks including a fully differentiable $k$-nearest neighbors, our proposed relaxations outperform prior work in end-to-end learning of semantic orderings of high-dimensional objects. 

In the future, we would like to explore alternate relaxations to sorting as well as applications that extend widely-used algorithms such as beam search~\citep{goyal2017continuous}. Both deterministic and stochastic \name{} are easy to implement.
We provide reference implementations in Tensorflow~\citep{abadi2016tensorflow} and PyTorch~\citep{paszke2017automatic} in Appendix~\ref{app:code}. 
The full codebase for this work is open-sourced at \url{https://github.com/ermongroup/neuralsort}.

\section*{Acknowledgements}
This research was supported by NSF (\#1651565, \#1522054, \#1733686), ONR, AFOSR (FA9550-19-1-0024), and FLI.
AG is supported by MSR Ph.D. fellowship and Stanford Data Science scholarship. 
We are thankful to Jordan Alexander, Kristy Choi, Adithya Ganesh, Karan Goel, Neal Jean, Daniel Levy, Jiaming Song, Yang Song, Serena Yeung, and Hugh Zhang for helpful comments.

\bibliography{refs}
\bibliographystyle{iclr2019_conference}

\newpage
\appendix
\section*{Appendices}

\section{Sorting Operator}\label{app:code}
% \definecolor{dkgreen}{rgb}{0,0.8,0}
\definecolor{gray}{rgb}{0.5,0.5,0.5}
\definecolor{mauve}{rgb}{0.58,0,0.82}
\definecolor{darkgreen}{rgb}{0.18,0.54,0.34}
\definecolor{maroon}{rgb}{0.64,0.16,0.16}
\lstset{emph={%  
    tf, shape, transpose, ones, matmul, nn, softmax, reshape, float32, log, expand_dims, cast, random_uniform, torch, view, size, rand, permute, unsqueeze, arange%
    },emphstyle={\color{maroon}}%
}

\lstset{frame=tb,
  language=Python,
  aboveskip=3mm,
  belowskip=3mm,
  showstringspaces=false,
  columns=flexible,
  basicstyle={\small\ttfamily},
  numbers=none,
  numberstyle=\tiny\color{gray},
  keywordstyle=\color{maroon},
  commentstyle=\color{blue},
  stringstyle=\color{mauve},
  breaklines=true,
  breakatwhitespace=true,
  tabsize=4
}

\subsection{Tensorflow}
Sorting Relaxation for Deterministic \name{}:
\begin{lstlisting}
import tensorflow as tf

def deterministic_NeuralSort(s, tau):
  """
  s: input elements to be sorted. Shape: batch_size x n x 1
  tau: temperature for relaxation. Scalar.
  """
  
  n = tf.shape(s)[1] 
  one = tf.ones((n, 1), dtype = tf.float32)
  
  A_s = tf.abs(s - tf.transpose(s, perm=[0, 2, 1]))
  B = tf.matmul(A_s, tf.matmul(one, tf.transpose(one)))
  scaling = tf.cast(n + 1 - 2 * (tf.range(n) + 1), dtype = tf.float32)
  C = tf.matmul(s, tf.expand_dims(scaling, 0))
    
  P_max = tf.transpose(C-B, perm=[0, 2, 1])
  P_hat = tf.nn.softmax(P_max / tau, -1)

  return P_hat    
\end{lstlisting}

Reparameterized Sampler for Stochastic \name{}:
\begin{lstlisting}
def sample_gumbel(samples_shape, eps = 1e-10):

	U = tf.random_uniform(samples_shape, minval=0, maxval=1)
	return -tf.log(-tf.log(U + eps) + eps)
	
def stochastic_NeuralSort(s, n_samples, tau):
  """
  s: parameters of the PL distribution. Shape: batch_size x n x 1.
  n_samples: number of samples from the PL distribution. Scalar.
  tau: temperature for the relaxation. Scalar.
  """
  
  batch_size = tf.shape(s)[0]
  n = tf.shape(s)[1]
  log_s_perturb = s + sample_gumbel([n_samples, batch_size, n, 1])
  log_s_perturb = tf.reshape(log_s_perturb, [n_samples * batch_size, n, 1])
  
  P_hat = deterministic_NeuralSort(log_s_perturb, tau)
  P_hat = tf.reshape(P_hat, [n_samples, batch_size, n, n])

  return P_hat    
\end{lstlisting}
\newpage
\subsection{PyTorch}
Sorting Relaxation for Deterministic \name{}:
\begin{lstlisting}
import torch

def deterministic_NeuralSort(s, tau):
  """
  s: input elements to be sorted. Shape: batch_size x n x 1
  tau: temperature for relaxation. Scalar.
  """
  
  n = s.size()[1] 
  one = torch.ones((n, 1), dtype = torch.float32)
  
  A_s = torch.abs(s - s.permute(0, 2, 1))
  B = torch.matmul(A_s, torch.matmul(one, torch.transpose(one, 0, 1)))
  scaling = (n + 1 - 2 * (torch.arange(n) + 1)).type(torch.float32)
  C = torch.matmul(s, scaling.unsqueeze(0))
    
  P_max = (C-B).permute(0, 2, 1)
  sm = torch.nn.Softmax(-1)
  P_hat = sm(P_max / tau)

  return P_hat    
\end{lstlisting}

Reparamterized Sampler for Stochastic \name{}:
\begin{lstlisting}
def sample_gumbel(samples_shape, eps = 1e-10):

	U = torch.rand(samples_shape)
	return -torch.log(-torch.log(U + eps) + eps)
	
def stochastic_NeuralSort(s, n_samples, tau):
  """
  s: parameters of the PL distribution. Shape: batch_size x n x 1.
  n_samples: number of samples from the PL distribution. Scalar.
  tau: temperature for the relaxation. Scalar.
  """
  
  batch_size = s.size()[0]
  n = s.size()[1]
  log_s_perturb = torch.log(s) + sample_gumbel([n_samples, batch_size, n, 1])
  log_s_perturb = log_s_perturb.view(n_samples * batch_size, n, 1)
  
  P_hat = deterministic_NeuralSort(log_s_perturb, tau)
  P_hat = P_hat.view(n_samples, batch_size, n, n)

  return P_hat    
\end{lstlisting}
\newpage
\begin{comment}
\begin{verbatim}
# s: M x n x 1
# \name{}(s): M x n x n
def \name{}(s, tau = 1):
  A_s = s - tf.transpose(s, perm=[0, 2, 1])
  A_s = tf.abs(A_s) 
  # everywhere-differentiable: A_s = tf.sqrt(tf.square(A_s) + eps)
  # As_ij = |s_i - s_j|

  n = tf.shape(s)[1] 
  one = tf.ones((n, 1), dtype = tf.float32)

  B = bl_matmul(A_s, one @ tf.transpose(one))
  # B_:k = (A_s)(one)

  K = tf.range(n) + 1
  # K_k = k

  C = tf.einsum('mij,jk->mik',
    s, tf.expand_dims(tf.cast(n + 1 - 2 * K, dtype = tf.float32), 0)
  )
  # C_:k = (n + 1 - 2k)s

  
  P = tf.nn.softmax((C - B) / tau, 1)
  # P_:k = softmax(...)
  
  
  

  P = tf.transpose(C - B, perm=[0, 2, 1])
  # P_k: = (n + 1 - 2k)s - (A_s)(one)
 
  P = tf.nn.softmax(P / tau, -1)
  # P_k: = softmax( ((n + 1 - 2k)s - (A_s)(one)) / tau )

  return P
\end{verbatim}
\end{comment}
\section{Proofs of Theoretical Results}\label{app:enc}
\subsection{Lemma~\ref{thm:sum_sort_identity}}

\begin{proof}
For any value of $\lambda$, the following inequalities hold:

\begin{align*}
\sum_{i=1}^k s_{[i]} & = \lambda k + \sum_{i=1}^k (s_{[i]} - \lambda) \\
& \leq \lambda k + \sum_{i=1}^k \max(s_{[i]} - \lambda, 0) \\
& \leq \lambda k + \sum_{i=1}^n \max(s_{i} - \lambda, 0).
\end{align*}

Furthermore, for $\lambda = s_{[k]}$: 
\begin{align*}
\lambda k + \sum_{i=1}^n \max(s_{i} - \lambda, 0) & = s_{[k]} k + \sum_{i=1}^n \max(s_{i} - s_{[k]}, 0) \\
& = s_{[k]} k + \sum_{i=1}^k (s_{[i]} - s_{[k]}) \\
& = \sum_{i=1}^k s_{[i]}.
\end{align*}

This finishes the proof.
\end{proof}

\subsection{Corollary~\ref{thm:sort_identity}}
\begin{proof}
We first consider at exactly what values of $\lambda$ the sum in Lemma~\ref{thm:sum_sort_identity} is minimized.  For simplicity we will only prove the case where all values of $\mathbf{s}$ are distinct.

The equality $\sum_{i=1}^k s_{[i]} =
 \lambda k + \sum_{i=1}^n \max(s_{i} - \lambda, 0)$ holds only when $s_{[k]}\leq \lambda \leq s_{[k+1]}$.  By Lemma~\ref{thm:sum_sort_identity}, these values of $\lambda$ also minimize the RHS of the equality.
 
Symmetrically, if one considers the score vector $\mathbf{t} = -\mathbf{s}$, then $\lambda (n-k+1) + \sum_{i=1}^n \max(t_{i} - \lambda, 0)$ is minimized at $t_{[n-k+1]} \leq \lambda \leq t_{[n-k+2]}$.

Replacing $\lambda$ by $-\lambda$ and using the definition of $\mathbf{t}$ implies that $\lambda (k-1-n) + \sum_{i=1}^n \max(\lambda - s_{i}, 0)$ is minimized at $s_{[k-1]} \leq \lambda \leq s_{[k]}$.

It follows that:
\begin{align*}
s_{[k]} & = \mathrm{arg}\min_{\lambda \in \mathbf{s}} \left(\lambda k + \sum_{i=1}^n \max(s_{i} - \lambda, 0)\right) + \left(\lambda (k-1-n) + \sum_{i=1}^n \max(\lambda - s_{i}, 0)\right) \\
& = \mathrm{arg}\min_{\lambda \in \mathbf{s}} \lambda (2k-1-n) + \sum_{i=1}^n \vert s_{i} - \lambda\vert.
\end{align*}

Thus, if $s_i = s_{[k]}$, then $i = \mathrm{arg}\min (2k-1-n)\mathbf{s} + A_{\mathbf{s}}\mathds{1}$. This finishes the proof.

\end{proof}
\subsection{Theorem~\ref{thm:convergence}}

We prove the two properties in the statement of the theorem independently:
\begin{enumerate}
\item \textit{Unimodality}
\begin{proof}
 By definition of the softmax function, the entries ef $\widehat{P}$ are positive and sum to 1. To show that $\widehat{P}$ satisfies the argmax permutation property, . Formally, for any given row $i$, we construct the argmax permutation vector $\mathbf{u}$ as:
 \begin{align*}
 u_i &= \arg \max[\mathrm{soft}\max((n+1-2i)\mathbf{s} - A_{\mathbf{s}}\mathds{1})]\\ &= \arg \max[(n+1-2i)\mathbf{s} - A_{\mathbf{s}}\mathds{1}]\\
 &=[i]
 \end{align*}
 where the square notation $[i]$ denotes the index of the $i$-th largest element. The first step follows from the fact that the softmax function is monotonically increasing and hence, it preserves the argmax. The second equality directly follows from Corollary~\ref{thm:sort_identity}. By definition, $\sort{}(\mathbf{s})=\{[1], [2], \ldots, [n]\}$, finishing the proof.
\end{proof}

\item \textit{Limiting behavior}
\begin{proof}
As shown in \citet{gao2017properties}, the softmax function may be equivalently defined as
$\mathrm{soft}\max (z/\tau) = \mathrm{arg}\max_{x \in \Delta^{n-1}} \langle x, z\rangle - \tau \sum_{i=1}^n x_i \log x_i $.  In particular, $\lim_{\tau \rightarrow 0} \mathrm{soft}\max (z/\tau) = \mathrm{arg}\max x$.  The distributional assumptions ensure that the elements of $\mathbf{s}$ are distinct a.s., so plugging in $z = (n+1-2k)\mathbf{s} - A_{\mathbf{s}}\mathds{1}$ completes the proof.
\end{proof}
\end{enumerate}

\subsection{Proposition~\ref{thm:gumbel_pl}}
This result follows from an earlier result by \citet{yellott1977relationship}. We give the proof sketch below and refer the reader to \citet{yellott1977relationship} for more details.
\begin{proof}[Sketch]
Consider random variables $\{X_i\}_{i=1}^n$ such that $X_i \sim \text{Exp}(s_{z_i}))$.

We may prove by induction a generalization of the memoryless property:

\begin{align*}
&q(X_1 \leq \dots \leq X_n \vert x \leq \min_i X_i) \\
& = \int_{0}^\infty q(x \leq X_1 \leq x + t \vert x \leq \min_i X_i) q(X_2 \leq \dots \leq X_n \vert x + t \leq \min_{i \geq 2} X_i) \mathrm{d}t \\
& = \int_{0}^\infty q(0 \leq X_1 \leq t) q(X_2 \leq \dots \leq X_n \vert x + t \leq \min_{i \geq 2} X_i) \mathrm{d}t.
\end{align*}

If we assume as inductive hypothesis that 
$q(X_2 \leq \dots \leq X_n \vert x + t \leq \min_{i \geq 2} X_i) = q(X_2 \leq \dots \leq X_n \vert t \leq \min_{i \geq 2} X_i)$, 
we complete the induction as:

\begin{align*}
& q(X_1 \leq \dots \leq X_n \vert x \leq \min_i X_i) \\ 
& = \int_{0}^\infty q(0 \leq X_1 \leq t) q(X_2 \leq \dots \leq X_n \vert t \leq \min_{i \geq 2} X_i) \mathrm{d}t \\
& = q(X_1 \leq X_2 \leq \dots \leq X_n \vert 0 \leq \min_i X_i).
\end{align*}

It follows from a familiar property of argmin of  exponential distributions that:
\begin{align*}
q(X_1 \leq X_2 \leq \dots \leq X_n) & = q(X_1 \leq \min_i X_i) q(X_2 \leq \dots \leq X_n \vert X_1 \leq \min_i X_i) \\
& = \frac{s_{z_1}}{Z}q(X_2 \leq \dots \leq X_n \vert X_1 \leq \min_i X_i) \\
& = \frac{s_{z_1}}{Z} \int_0^\infty q(X_1 = x) q(X_2 \leq \dots \leq X_n \vert x \leq \min_{i \geq 2} X_i) \mathrm{d}x \\
& = \frac{s_{z_1}}{Z} q(X_2 \leq \dots \leq X_n), \\
\end{align*}
and by another induction, we have $q(X_1 \leq \dots \leq X_n) = \prod_{i=1}^n \frac{s_{z_i}}{Z - \sum_{k=1}^{i-1} s_{z_k}}$.

Finally, following the argument of \citet{balog2017relatives}, we apply the strictly decreasing function $g(x) = -\beta\log x$ to this identity, which from the definition of the Gumbel distribution implies:

\begin{align*}
q(\tilde{s}_{z_1} \geq \dots \geq \tilde{s}_{z_n}) = \prod_{i=1}^n \frac{s_{z_i}}{Z - \sum_{k=1}^{i-1} s_{z_k}}.
\end{align*}

\end{proof}

\section{Arg Max semantics for Tied Max Elements}\label{app:ties}

% Recall the following property in the definition of unimodal row-stochastic matrices: 

% \textit{Let $\mathbf{u}$ denote an $n$-dimensional vector with entries such that $u_i = \arg \max_j U[i, j] \quad\forall i \in \{1,2, \ldots, n\}$. Then, $\mathbf{u} \in \mathcal{Z}_n$.}

% However, if ties are present, each row could have multiple candidate argmaxes.

While applying the $\arg\max$ operator to a vector with duplicate entries attaining the $\max$ value, we need to define the operator semantics for $\arg\max$ to handle ties in the context of the proposed relaxation.

\begin{definition}
For any vector with ties, let $\arg \max \mathrm{set}$ denote the operator that returns the set of all indices containing the $\max$ element. We define the $\arg \max$ of the $i$-th in a matrix $M$ recursively:

\begin{enumerate}
    \item If there exists an index $j \in \{1, 2, \dots, n\}$ that is a member of $\arg \max \mathrm{set} (M[i,:])$ and has not been assigned as an $\arg \max$ of any row $k<i$, then the $\arg \max$ is the smallest such index.
    \item Otherwise, the $\arg \max$ is the smallest index that is a member of the $\arg \max \mathrm{set}(M[i,:])$.
    % , even though this index is an $\arg \max$ of a previous row.
\end{enumerate}
\end{definition}

This function is efficiently computable with additional bookkeeping.

\begin{lemma}
\label{lem1}
For an input vector $\mathbf{s}$ with the sort permutation matrix given as $P_{\sort{(\mathbf{s})}}$, we have 
$s_{j_1}$ = $s_{j_2}$ if and only if there exists a row $i$ such that $\widehat{P}[i, j_1] = \widehat{P}[i, j_2]$ for all $j_1, j_2 \in \{1,2,\ldots,n\}$.
\end{lemma}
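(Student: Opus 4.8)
The plan is to reduce both directions of the biconditional to a statement about the \emph{arguments} of the row-wise softmax. Within a fixed row $i$, the map $\widehat{P}[i,:]$ applies the same strictly increasing function $x\mapsto e^{x/\tau}$ to each coordinate of $(n+1-2i)\mathbf{s}-A_{\mathbf{s}}\mathds{1}$ and then normalizes by a common denominator, so $\widehat{P}[i,j_1]=\widehat{P}[i,j_2]$ holds \emph{if and only if} the two corresponding arguments agree:
\begin{align*}
(n+1-2i)\,s_{j_1}-(A_{\mathbf{s}}\mathds{1})_{j_1}=(n+1-2i)\,s_{j_2}-(A_{\mathbf{s}}\mathds{1})_{j_2},
\end{align*}
where $(A_{\mathbf{s}}\mathds{1})_j=\sum_{k=1}^n|s_j-s_k|=:D(j)$ is the total absolute distance from $s_j$ to all scores. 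Writing $\delta=s_{j_1}-s_{j_2}$, this is equivalent to the scalar identity $(n+1-2i)\,\delta=D(j_1)-D(j_2)$. I read the biconditional as quantified over \emph{every} row $i$ (the reading under which it is true, since a single row need not separate distinct values), so both directions reduce to analyzing this identity as $i$ varies.

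For the forward direction, suppose $s_{j_1}=s_{j_2}$. Then $\delta=0$, and since $D(j)$ depends on $s_j$ only through its value, $D(j_1)=\sum_k|s_{j_1}-s_k|=\sum_k|s_{j_2}-s_k|=D(j_2)$. Hence the scalar identity holds for every coefficient $n+1-2i$, and so $\widehat{P}[i,j_1]=\widehat{P}[i,j_2]$ in every row; equivalently, columns $j_1$ and $j_2$ of $\widehat{P}$ are identical.

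The substantive direction is the converse, which I would argue in contrapositive form: assuming $s_{j_1}\neq s_{j_2}$, I must exhibit one row whose arguments differ. I would pick the extreme row $i=1$, whose coefficient $n+1-2i=n-1$ has the largest magnitude, and bound $|D(j_1)-D(j_2)|$. By the reverse triangle inequality each summand obeys $\big|\,|s_{j_1}-s_k|-|s_{j_2}-s_k|\,\big|\le|\delta|$; moreover the terms $k=j_1$ and $k=j_2$ contribute exactly $-\delta$ and $+\delta$ and cancel, leaving $n-2$ terms each bounded by $|\delta|$. Thus $|D(j_1)-D(j_2)|\le(n-2)|\delta|<(n-1)|\delta|=|(n-1)\delta|$, so the identity $(n-1)\delta=D(j_1)-D(j_2)$ is impossible and row $i=1$ separates $j_1$ from $j_2$.

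The main obstacle is precisely this strict inequality in the converse. I expect the two key observations to be (i) that it suffices to test a single, maximally weighted row rather than reason over all rows simultaneously, and (ii) that the self-distance terms $k=j_1,j_2$ cancel, dropping the bound from $(n-1)|\delta|$ to $(n-2)|\delta|$ and opening the strict gap that forces the arguments apart. Everything else is the routine ``softmax is a strictly monotone, common-denominator transform'' reduction that turns equality of matrix entries into equality of softmax arguments.
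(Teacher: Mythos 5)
Your proof is correct, and it diverges from the paper's own argument in a substantive way: the paper's proof is a one-liner that hides (and in fact mis-states) the step where the real work happens, and your write-up supplies exactly that missing step. Both proofs begin with the same reduction---within a fixed row, equality of softmax outputs is equivalent to equality of the corresponding pre-softmax arguments---but the paper then writes both arguments with a common offset $(A_{\mathbf{s}}\mathds{1})_i$ and concludes by inverting $f(x)=\exp(((n+1-2i)x-(A_{\mathbf{s}}\mathds{1})_i)/\tau)/Z$. That indexing is off: the offset is column-dependent, $(A_{\mathbf{s}}\mathds{1})_j=\sum_k\vert s_j-s_k\vert$, so the map that actually needs to be injective on the score set is $x\mapsto(n+1-2i)x-\sum_k\vert x-s_k\vert$, and this fails for the middle rows (for odd $n$ and $i=(n+1)/2$ the linear term vanishes; e.g.\ $\mathbf{s}=[0,1,2]^T$ gives $\widehat{P}[2,1]=\widehat{P}[2,3]$ while $s_1\neq s_3$). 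The same example vindicates your reading of the quantifier: the ``there exists a row'' phrasing is false as literally stated, and the version that is true---and that your forward direction delivers---is equality in every row. Your converse is precisely the ingredient the paper omits: pass to the extreme row $i=1$ (or $i=n$), cancel the two self-distance terms $k=j_1,j_2$, and bound the remaining $n-2$ terms by $\vert\delta\vert$ to obtain $\vert D(j_1)-D(j_2)\vert\le(n-2)\vert\delta\vert<(n-1)\vert\delta\vert$. What your route buys is an argument that survives the genuine non-injectivity of the interior rows; what the paper's buys is brevity, at the cost of an invertibility claim that does not hold as written. One cosmetic point: the contributions of the terms $k=j_1$ and $k=j_2$ are $-\vert\delta\vert$ and $+\vert\delta\vert$ rather than $-\delta$ and $+\delta$ when $\delta<0$, but they cancel either way, so nothing in your bound changes.
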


\begin{proof}
From Eq.~\ref{eq:relaxed_perm}, we have the $i$-th row of $\widehat{P}[i, :]$ given as:
$$\widehat{P}[i, :] = \mathrm{soft}\max 	\left[((n+1-2i)\mathbf{s} - A_{\mathbf{s}} \mathds{1})/\tau\right]$$.
Therefore, we have the equations:
$$\widehat{P}[i, j_1] = \frac{\exp(((n+1-2i)s_{j_1} - (A_{\mathbf{s}} \mathds{1})_i)/\tau ) }{Z}$$
$$= \widehat{P}[i, j_2] = \frac{\exp(((n+1-2i)s_{j_2} - (A_{\mathbf{s}} \mathds{1})_i)/\tau)}{Z}$$
for some fixed normalization constant $Z$. As the function $f(x) = \frac{\exp(((n+1-2i)x - (A_{\mathbf{s}} \mathds{1})_i)/\tau)}{Z}$ is invertible, both directions of the lemma follow immediately.
\end{proof}

\begin{lemma}
\label{lem2}
If  $\arg\max\mathrm{set}(\widehat{P}[i_1, :])$ and $\arg\max\mathrm{set}(\widehat{P}[i_2, :])$ have a non-zero intersection, then $\arg\max\mathrm{set}(\widehat{P}[i_1, :])= \arg\max\mathrm{set}(\widehat{P}[i_2, :])$.
\end{lemma}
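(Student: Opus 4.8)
The plan is to reduce the statement about shared $\arg\max$ indices to a statement about equal score values, using Lemma~\ref{lem1} as the bridge, and then to exploit the fact that equality of scores is a property of the columns that does not depend on the row index. First I would fix an index $j^{*}$ in the (nonempty) intersection $\arg\max\mathrm{set}(\widehat{P}[i_1,:]) \cap \arg\max\mathrm{set}(\widehat{P}[i_2,:])$. To prove the two sets coincide it suffices to establish a single inclusion, say $\arg\max\mathrm{set}(\widehat{P}[i_1,:]) \subseteq \arg\max\mathrm{set}(\widehat{P}[i_2,:])$, since the hypothesis is symmetric in $i_1$ and $i_2$ and the reverse inclusion then follows verbatim by swapping their roles (with the same witness $j^{*}$).

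For the inclusion I would take an arbitrary $j \in \arg\max\mathrm{set}(\widehat{P}[i_1,:])$. By definition of $\arg\max\mathrm{set}$, both $j$ and $j^{*}$ attain the maximum value of row $i_1$, so $\widehat{P}[i_1,j] = \widehat{P}[i_1,j^{*}]$. Applying Lemma~\ref{lem1} with row $i_1$ as the witness, this equality of entries forces $s_j = s_{j^{*}}$. I would then push this back through Lemma~\ref{lem1} in the opposite direction: equality of scores $s_j = s_{j^{*}}$ yields $\widehat{P}[i_2,j] = \widehat{P}[i_2,j^{*}]$. Since $j^{*} \in \arg\max\mathrm{set}(\widehat{P}[i_2,:])$, the value $\widehat{P}[i_2,j^{*}]$ is the row maximum of $i_2$, so $\widehat{P}[i_2,j]$ equals that maximum as well, placing $j$ in $\arg\max\mathrm{set}(\widehat{P}[i_2,:])$. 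This is exactly the inclusion we want.

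The one subtlety I would be careful about — and the only place the argument could slip — is the direction of Lemma~\ref{lem1} invoked in the second step. As literally stated, Lemma~\ref{lem1} asserts only the existence of \emph{some} row with equal entries, whereas here I need the equality specifically in row $i_2$. This is not a real obstacle: the explicit softmax expression from the proof of Lemma~\ref{lem1} shows directly that whenever $s_j = s_{j^{*}}$, the corresponding rows of $A_{\mathbf{s}}$ are identical (since $A_{\mathbf{s}}[j,k] = |s_j - s_k| = |s_{j^{*}} - s_k| = A_{\mathbf{s}}[j^{*},k]$ for every $k$), hence the row-sums $(A_{\mathbf{s}}\mathds{1})_j$ and $(A_{\mathbf{s}}\mathds{1})_{j^{*}}$ agree and the pre-softmax entries $(n+1-2i)s_j - (A_{\mathbf{s}}\mathds{1})_j$ and $(n+1-2i)s_{j^{*}} - (A_{\mathbf{s}}\mathds{1})_{j^{*}}$ coincide for \emph{every} row $i$. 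I would state this row-independent strengthening of the forward direction explicitly and invoke it in the step above, so the argument closes without any case analysis. Combining the two inclusions gives $\arg\max\mathrm{set}(\widehat{P}[i_1,:]) = \arg\max\mathrm{set}(\widehat{P}[i_2,:])$, completing the proof.
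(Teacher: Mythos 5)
Your proof is correct and follows essentially the same route as the paper's: both use Lemma~\ref{lem1} to convert ``two columns tie for the row maximum in row $i_1$'' into $s_{j}=s_{j^*}$, push that equality back into row $i_2$ to transfer argmax-set membership, and close by symmetry. Your explicit remark that the forward direction of Lemma~\ref{lem1} holds for \emph{every} row (since $s_j=s_{j^*}$ makes the pre-softmax entries agree in all rows) is a worthwhile clarification of a step the paper's proof uses implicitly when it writes ``therefore $\widehat{P}[i_2,j_1]=\widehat{P}[i_2,j_2]$,'' but it does not change the argument.
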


\begin{proof}
Assume without loss of generality that $\vert\arg\max\mathrm{set}(\widehat{P}[i_1, :])\vert > 1$ for some $i$. Let $j_1, j_2$ be two members of $\vert\arg\max\mathrm{set}(\widehat{P}[i_1, :])\vert$.  By Lemma~\ref{lem1}, $s_{j_1} = s_{j_2}$, and therefore $\widehat{P}[i_2, j_1] = \widehat{P}[i_2, j_2]$.  Hence if  $j_1 \in \arg\max\mathrm{set}(\widehat{P}[i_2, :])$, then $j_2$ is also an element. A symmetric argument implies that if $j_2 \in \arg\max\mathrm{set}(\widehat{P}[i_2, :])$, then $j_1$ is also an element for arbitrary $j_1, j_2 \in \vert\arg\max\mathrm{set}(\widehat{P}[i_1, :])\vert$. This completes the proof.
\end{proof}

\begin{proposition} 
(Argmax Permutation with Ties) For $\mathbf{s} = [s_1, s_2, \dots, s_n]^T \in \R^n$, the vector $\mathbf{z}$ defined by $z_i = \arg \max_j \widehat{P}_{\sort{}(\mathbf{s})}[i, j]$ is such that $\mathbf{z} \in \mathcal{Z}_n$.
\end{proposition}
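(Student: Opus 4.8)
The plan is to prove $\mathbf{z}\in\mathcal{Z}_n$ by showing the map $i\mapsto z_i$ is injective; since it sends $\{1,\dots,n\}$ into itself, injectivity already forces $\mathbf{z}$ to be a bijection, hence a permutation. First I would organize the rows of $\widehat{P}_{\sort{}(\mathbf{s})}$ by their argmax sets. By Lemma~\ref{lem2}, any two sets $\arg\max\mathrm{set}(\widehat{P}[i_1,:])$ and $\arg\max\mathrm{set}(\widehat{P}[i_2,:])$ are either equal or disjoint, so the argmax sets form a partition of the columns into disjoint blocks. By Lemma~\ref{lem1}, two columns lie in a common block precisely when their scores coincide; hence each block $S$ is a maximal set of indices sharing a common score value $v_S$, and $|S|$ equals the multiplicity of $v_S$ in $\mathbf{s}$.

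The central step is a counting claim: each block $S$ is the argmax set of exactly $|S|$ rows. To establish it I would reuse the minimization characterization behind Corollary~\ref{thm:sort_identity}: row $i$ maximizes $(n+1-2i)s_j - (A_{\mathbf{s}}\mathds{1})_j$ over $j$, equivalently it minimizes $h_i(\lambda) = (2i-1-n)\lambda + \sum_k |s_k - \lambda|$ over $\lambda\in\{s_1,\dots,s_n\}$, with minimizing value $s_{[i]}$. I would then extend this to ties by showing the minimizing \emph{value} is unique even though the minimizing index is not. Writing the distinct values as $v_1 > \dots > v_p$ with multiplicities $m_1,\dots,m_p$ and partial sums $M_t = \sum_{r\le t} m_r$, a direct computation gives $h_i(v_{t-1}) - h_i(v_t) = (v_{t-1}-v_t)(2i-1-2M_{t-1})$ and $h_i(v_{t+1}) - h_i(v_t) = (v_t-v_{t+1})(2M_t - 2i + 1)$. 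For every rank $i$ with $M_{t-1} < i \le M_t$ both differences are strictly positive, so by convexity $v_t$ is the unique minimizing value, and therefore $\arg\max\mathrm{set}(\widehat{P}[i,:])$ equals the block of indices with score $v_t$. Since exactly $m_t = |S|$ ranks satisfy $M_{t-1} < i \le M_t$, exactly $|S|$ rows have argmax set $S$.

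Finally I would feed this count into the recursive tie-breaking rule of the Definition. Processing rows $i = 1, \dots, n$ in order, fix a block $S$ referenced by $\ell = |S|$ rows. When the $k$-th such row is reached ($1 \le k \le \ell$), at most $k-1$ indices of $S$ have been assigned so far, leaving at least $\ell - (k-1) \ge 1$ unassigned members of $S$; hence the first clause of the Definition fires and the row receives a fresh, previously unused index from $S$. Thus the $\ell$ rows referencing $S$ obtain $\ell$ distinct indices, exhausting $S$ exactly, and the second clause (which could reuse an index) is never triggered. Applying this across all blocks shows no index is assigned twice, so $\mathbf{z}$ is injective and hence a valid permutation.

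I expect the main obstacle to be the counting claim of the second paragraph, specifically verifying that the piecewise-linear objective $h_i$ attains its minimum at a \emph{unique} value in the presence of ties, since that uniqueness is exactly what pins down the number of rows assigned to each block. Once the count is in hand, the partition structure of the first paragraph and the bookkeeping argument of the third are routine.
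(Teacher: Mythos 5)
Your proof is correct and follows the same overall route as the paper's: partition the columns into blocks of tied scores using Lemmas~\ref{lem1} and~\ref{lem2}, show that each block of size $m$ is the argmax set of exactly $m$ rows, and then combine the tie-breaking definition with a pigeonhole count to conclude that the row-wise argmaxes are pairwise distinct. The one place where you go beyond the paper is the counting claim: the paper simply invokes Corollary~\ref{thm:sort_identity} to assert that the rows $i'$ with $s_{[i']}=s_{[i]}$ are precisely the ones maximized on the block $\{j_1,\dots,j_m\}$, but the proof of that corollary is explicitly carried out only for distinct entries of $\mathbf{s}$, so in the tied case this step is taken on faith. Your computation of $h_i(v_{t-1})-h_i(v_t)=(v_{t-1}-v_t)(2i-1-2M_{t-1})$ and $h_i(v_{t+1})-h_i(v_t)=(v_t-v_{t+1})(2M_t-2i+1)$, together with the convexity of the piecewise-linear $h_i$, is exactly the missing verification that for every rank $i$ with $M_{t-1}<i\le M_t$ the minimizing \emph{value} is uniquely $v_t$ even in the presence of ties; I checked the algebra and it is right. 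The net effect is that your write-up is, if anything, more complete than the paper's own argument.
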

\begin{proof}
From Corollary~\ref{thm:sort_identity}, we know that the row $\widehat{P}_{\sort{}(\mathbf{s})}[i, :]$ attains its maximum (perhaps non-uniquely) at some $\widehat{P}_{\sort{}(\mathbf{s})}[i, j]$ where $s_j = s_{[i]}$. Note that $s_{[i]}$ is well-defined even in the case of ties.

Consider an arbitrary row $\widehat{P}_{\sort{}(\mathbf{s})}[i, :]$ and let $\arg \max \mathrm{set}(\widehat{P}_{\sort{}(\mathbf{s})}[i, :]) = \{j_1, \ldots, j_m\}$. It follows from Lemma~\ref{lem1} that there are exactly $m$ scores in $\mathbf{s}$ that are equal to $s_{[i]}$: $s_{j_1}, \dots, s_{j_m}$. These scores corresponds to $m$ values of $s_{[i']}$ such that $s_{[i']} = s_{[i]}$, and consequently to $m$ rows $P[i', :]$ that are maximized with values $s_{[i']} = s_{[i]}$ and consequently (by Lemma~\ref{lem2}) at indices $j_1, \dots, j_m$.

Suppose we now chose an $i'$ such that $s_{[i']} \neq s_{[i]}$. Then $\widehat{P}[i', :]$ attains its maximum at some $\widehat{P}{\sort{}(\mathbf{s})}[i', j']$ where $s_{j'} = s_{[i']}$. Because $s_{j'} = s_{[i']} \neq s_{[i]} = s_{j}$, Lemma~\ref{lem1} tells us that $\widehat{P}[i', :]$ does not attain its maximum at any of $j_1, \dots, j_m$. Therefore, only $m$ rows have a non-zero $\arg \max \mathrm{set}$ intersection with  $\arg \max \mathrm{set}(P[i, :])$.
% , and these $m$ rows share all of row $P[i, :]$'s argmaxes.

Because $P[i, :]$ is one of these rows, there can be up to $m - 1$ such rows above it. Because each row above only has one $\arg\max$ assigned via the tie-breaking protocol, it is only possible for up to $m - 1$ elements of $\arg \max \mathrm{set}(P[i, :])$ to have been an $\arg \max$ of a previous row $k<i$. As $\vert\arg \max \mathrm{set}(P[i, :])\vert = m $, there exists at least one element that has not been specified as the $\arg\max$ of a previous row (pigeon-hole principle). Thus, the $\arg \max$ of each row are distinct. Because each argmax is also an element of $\{1, \dots, n\}$, it follows that $\mathbf{z} \in \mathcal{Z}_n$.
\end{proof}

\section{Experimental Details and Analysis}\label{app:exp}

% \subsection{Example of a Stochastic Computation Graph}

\begin{figure*}[t]
\centering
\scalebox{1.}{
 \begin{tikzpicture}
 \node[draw=none,fill=none] (a) at (0,1) {$x$};
 \node[draw=none,fill=none] (b) at (1,0) {$\theta$};
 \node[circle] (c) at (1,1) [draw, minimum width=0.5cm,minimum height=0.5cm] {$y$};
 \node[rectangle] (d) at (2,1) [draw, minimum width=0.5cm,minimum height=0.5cm] {$f$};
 \foreach \from/\to in {a/c, b/c, c/d}
\draw [->] (\from) -- (\to);
\node[draw=none,fill=none] (b) at (7,0.75) {Objective: $\mathbb{E}_{y\sim p(y \vert x; \theta)}[f(y)]$};
\node[draw=none,fill=none] (b) at (7,0.25) {Gradient Estimator: $\mathbb{E}_{y\sim p(y \vert x; \theta)}\left[\frac{\partial}{\partial \theta}\log p(y \vert x;\theta) f(y)\right]$};
    \end{tikzpicture}
    }
\caption{A stochastic computation graph for an arbitrary input $x$, intermediate node $y$, and a single parameter $\theta$. Squares denote deterministic nodes and circles denote stochastic nodes.
% \s{notation might be missing a theta. gradient formula looks strange}
}\label{fig:stoc_comp_graphs_example}
\end{figure*}

% Figure~\ref{fig:stoc_comp_graphs_example} shows a stochastic computation graph for an illustrative objective function.

We used Tensorflow~\citep{abadi2016tensorflow} and PyTorch~\citep{paszke2017automatic} for our experiments. In Appendix~\ref{app:code}, we provide ``plug-in" snippets for implementing our proposed relaxations in both Tensorflow and PyTorch. The full codebase for reproducing the experiments can be found at {\texttt{https://github.com/ermongroup/neuralsort}}.

For the sorting and quantile regression experiments, we used standard training/validation/test splits of $50,000/10,000/10,000$ images of MNIST for constructing the large-MNIST dataset. We ensure that only digits in the standard training/validation/test sets of the MNIST dataset are composed together to generate the corresponding sets of the large-MNIST dataset. For CIFAR-10, we used a split of $45,000/5000/10,000$ examples for training/validation/test. With regards to the baselines considered, we note that the REINFORCE based estimators were empirically observed to be worse than almost all baselines for all our experiments. 
% We now report experimental details with regards to the individual experiments.

\subsection{Sorting handwritten numbers}

\paragraph{Architectures.}

We control for the choice of computer vision models by using the same convolutional network architecture for each sorting method.  This architecture is as follows:

\begin{align*}
&\text{Conv[Kernel: 5x5, Stride: 1, Output: 140x28x32, Activation: Relu]} \\
&\rightarrow \text{Pool[Stride: 2, Output: 70x14x32]} \\
&\rightarrow \text{Conv[Kernel: 5x5, Stride: 1, Output: 70x14x64, Activation: Relu]} \\
&\rightarrow \text{Pool[Stride: 2, Output: 35x7x64]} \\
&\rightarrow \text{FC[Units: 64, Activation: Relu]} \\
\end{align*}

Note that the dimension of a 5-digit large-MNIST image is $140\times 28$.
% Note that for the sake of faster dataset synthesis, large-MNIST images are vertical (140x28) rather than horizontal (28x140).
The primary difference between our methods is how we combine the scores to output a row-stochastic prediction matrix.

For \name{}-based methods, we use another fully-connected layer of dimension $1$ to map the image representations to $n$ scalar scores. In the case of Stochastic \name{}, we then sample from the PL distribution by perturbing the scores multiple times with Gumbel noise. Finally, we use the \name{} operator to map the set of $n$ scores (or each set of $n$ perturbed scores) to its corresponding unimodal row-stochastic matrix.

For Sinkhorn-based methods, we use a fully-connected layer of dimension $n$ to map each image to an $n$-dimensional vector. These vectors are then stacked into an $n \times n$ matrix. We then either map this matrix to a corresponding doubly-stochastic matrix (Sinkhorn) or sample directly from a distribution over permutation matrices via Gumbel perturbations (Gumbel-Sinkhorn). We implemented the Sinkhorn operator based on code snippets obtained from the open source implementation of \citet{mena2018learning} available at \texttt{https://github.com/google/gumbel\_sinkhorn}.

For the Vanilla RS baseline, we ran each element through a fully-connected $n$ dimensional layer, concatenated the representations of each element and then fed the results through three fully-connected $n^2$-unit layers to output multiclass predictions for each rank.

All our methods yield row-stochastic $n \times n$ matrices as their final output. Our loss is the row-wise cross-entropy loss between the true permutation matrix and the row-stochastic output.

\begin{figure}[t]
    \centering
    \begin{subfigure}[b]{0.32\textwidth}
    \centering
    \includegraphics[width=\textwidth]{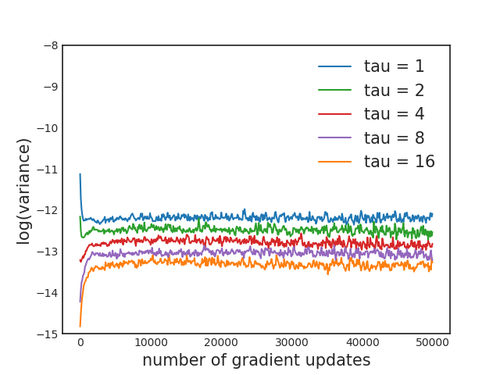}
    \caption{Deterministic \name{} $(n=5)$}\label{}
    \end{subfigure}
    \begin{subfigure}[b]{0.32\textwidth}
    \centering
    \includegraphics[width=\textwidth]{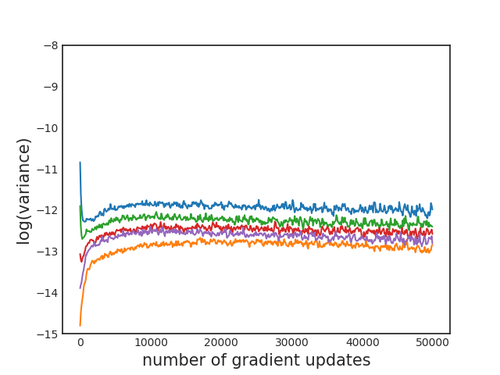}
    \caption{Deterministic \name{} $(n=9)$}\label{}
    \end{subfigure}
    \begin{subfigure}[b]{0.32\textwidth}
    \centering
    \includegraphics[width=\textwidth]{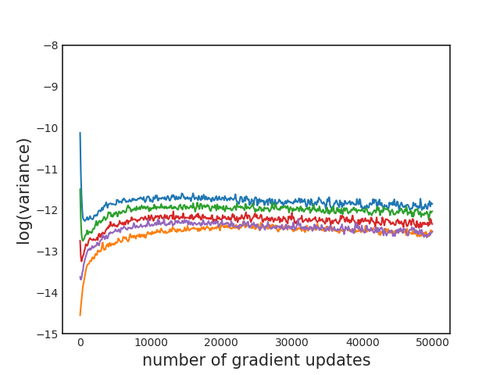}
    \caption{Deterministic \name{} $(n=15)$}\label{}
    \end{subfigure}
     \begin{subfigure}[b]{0.32\textwidth}
    \centering
    \includegraphics[width=\textwidth]{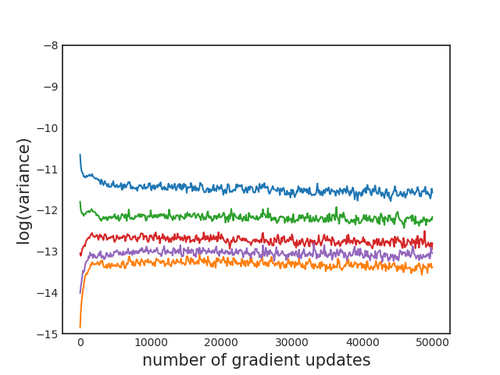}
    \caption{Stochastic \name{} $(n=5)$}\label{}
    \end{subfigure}
    \begin{subfigure}[b]{0.32\textwidth}
    \centering
    \includegraphics[width=\textwidth]{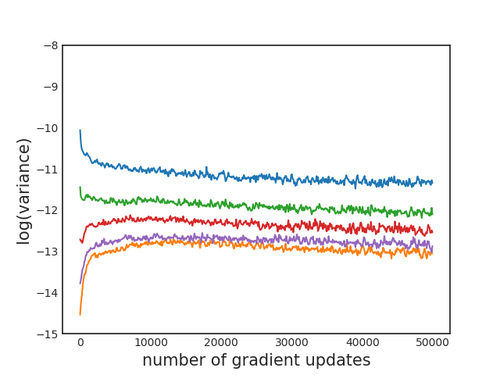}
    \caption{Stochastic \name{} $(n=9)$}\label{}
    \end{subfigure}
    \begin{subfigure}[b]{0.32\textwidth}
    \centering
    \includegraphics[width=\textwidth]{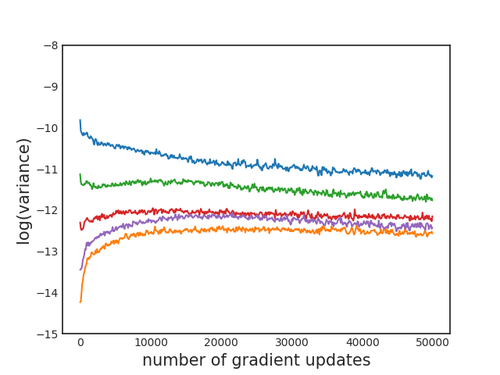}
    \caption{Stochastic \name{} $(n=15)$}\label{}
    \end{subfigure}
    \caption{Running average of the log-variance in gradient estimates during training for varying temperatures $\tau$.}
    \label{fig:variance_sorting}
\end{figure}

\paragraph{Hyperparameters.}

For this experiment, we used an Adam optimizer with an initial learning rate of $10^{-4}$ and a batch size of $20$.
Continuous relaxations to sorting also introduce another hyperparameter: the temperature $\tau$ for the Sinkhorn-based and \name{}-based approaches. We tuned this hyperparameter on the set $\{1, 2, 4, 8, 16\}$ by picking the model with the best validation accuracy on predicting entire permutations (as opposed to predicting individual maps between elements and ranks).

\paragraph{Effect of temperature.}

In Figure~\ref{fig:variance_sorting}, we report the log-variance in gradient estimates as a function of the temperature $\tau$. Similar to the effect of temperature observed for other continuous relaxations to discrete objects such as Gumbel-softmax~\citep{jang2016categorical,maddison2016concrete}, we note that higher temperatures lead to lower variance in gradient estimates. The element-wise mean squared difference between unimodal approximations $\widehat{P}_{\sort(\bs)}$ and the projected hard permutation matrices $P_{\sort(\bs)}$ for the best $\tau$ on the test set is shown in Table~\ref{tab:how_close}.

\begin{table}[t]
\centering
\caption{Element-wise mean squared difference between unimodal approximations and the projected hard permutation matrices for the best temperature $\tau$, averaged over the test set.}
\label{tab:how_close}
\scriptsize
\begin{tabular}{|c|c|c|c|c|c|c|}
    \toprule
    Algorithm & $n=3$ & $n=5$ & $n=7$ & $n=9$ & $n=15$ 
    \\ \midrule
    Deterministic \name{} & 0.0052 & 0.0272 & 0.0339 & 0.0105 & 0.0220 \\
    Stochastic \name{} & 0.0095 & 0.0327 & 0.0189 & 0.0111 & 0.0179 \\
    \bottomrule
\end{tabular}
\end{table}

\subsection{Quantile Regression}
\paragraph{Architectures.}
Due to resource constraints, we ran the quantile regression experiment on 4-digit numbers instead of 5-digit numbers. We use the same neural network architecture as previously used in the sorting experiment.

\begin{align*}
&\text{Conv[Kernel: 5x5, Stride: 1, Output: 112x28x32, Activation: Relu]} \\
&\rightarrow \text{Pool[Stride: 2, Output: 56x14x32]} \\
&\rightarrow \text{Conv[Kernel: 5x5, Stride: 1, Output: 56x14x64, Activation: Relu]} \\
&\rightarrow \text{Pool[Stride: 2, Output: 28x7x64]} \\
&\rightarrow \text{FC[Units: 64, Activation: Relu]} \\
\end{align*}

The vanilla NN baseline for quantile regression was generated by feeding the CNN representations into a series of three fully-connected layers of ten units each, the last of which mapped to a single-unit estimate of the median.
\begin{figure}[t]
    \centering
    \begin{subfigure}[b]{0.32\textwidth}
    \centering
    \includegraphics[width=\textwidth]{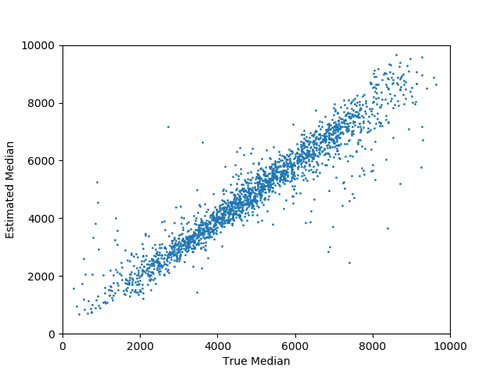}
    \caption{Deterministic \name{} $(n=5)$}\label{}
    \end{subfigure}
    \begin{subfigure}[b]{0.32\textwidth}
    \centering
    \includegraphics[width=\textwidth]{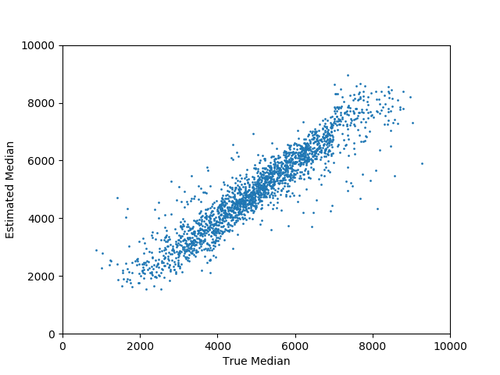}
    \caption{Deterministic \name{} $(n=9)$}\label{}
    \end{subfigure}
    \begin{subfigure}[b]{0.32\textwidth}
    \centering
    \includegraphics[width=\textwidth]{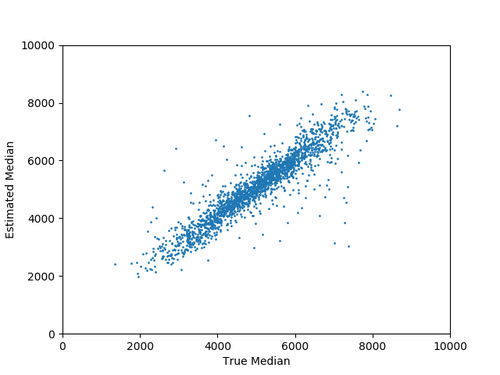}
    \caption{Deterministic \name{} $(n=15)$}\label{}
    \end{subfigure}
     \begin{subfigure}[b]{0.32\textwidth}
    \centering
    \includegraphics[width=\textwidth]{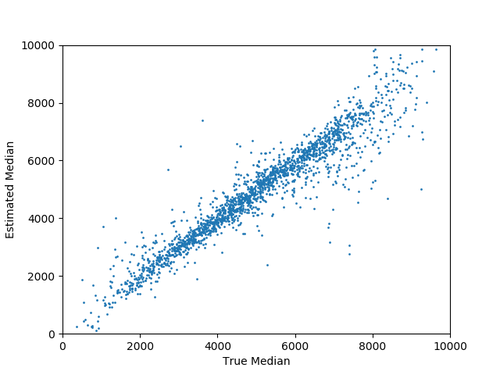}
    \caption{Stochastic \name{} $(n=5)$}\label{}
    \end{subfigure}
    \begin{subfigure}[b]{0.32\textwidth}
    \centering
    \includegraphics[width=\textwidth]{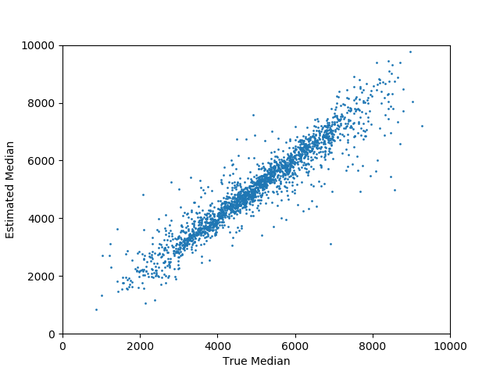}
    \caption{Stochastic \name{} $(n=9)$}\label{}
    \end{subfigure}
    \begin{subfigure}[b]{0.32\textwidth}
    \centering
    \includegraphics[width=\textwidth]{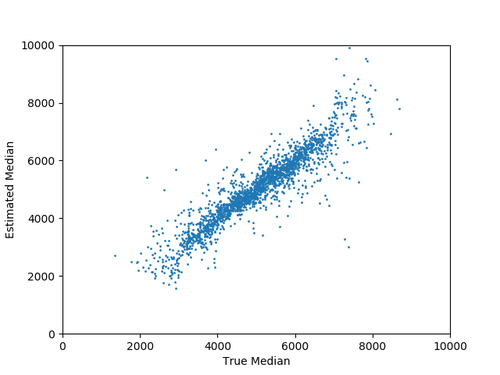}
    \caption{Stochastic \name{} $(n=15)$}\label{}
    \end{subfigure}
    \caption{True vs. predicted medians for quantile regression on the large-MNIST dataset.}
    \label{fig:r2_qr}
\end{figure}
In the other experiments, one copy of this network was used to estimate each element's rank through a method like Gumbel-Sinkhorn or \name{} that produces a row-stochastic matrix, while another copy was used to estimate each element's value directly. Point predictions are obtained by multiplying the center row of the matrix with the column vector of estimated values, and we minimize the $\ell_2$ loss between these point predictions and the true median, learning information about ordering and value simultaneously.

\paragraph{Hyperparameters.}
We used the Adam optimizer with an initial learning rate of $10^{-4}$ and a batch size of 5.
The temperature $\tau$ was tuned on the set $\{1, 2, 4, 8, 16\}$ based on the validation loss.

\paragraph{Further Analysis.}

In Figure~\ref{fig:r2_qr}, we show the scatter plots for the true vs. predicted medians on $2000$ test points from the large-MNIST dataset as we vary $n$. For stochastic \name{}, we average the predictions across $5$ samples. As we increase $n$, the distribution of true medians concentrates, leading to an easier prediction problem (at an absolute scale) and hence, we observe lower MSE for larger $n$ in Table~\ref{tab:quantile_l2}. However, the relatively difficulty of the problem increases with increasing $n$, as the model is trying to learn a semantic sorting across a larger set of elements. This is reflected in the $R^2$ values in Table~\ref{tab:quantile_l2} which show a slight dip as $n$ increases.

\subsection{End-to-end, differentiable $k$-nearest neighbors}

\paragraph{Architectures.}
The baseline kNN implementation for the pixel basis, PCA basis and the autoencoder basis was done using \texttt{sklearn}.
For the autoencoder baselines for kNN, we used the following standard architectures.

\textbf{MNIST and Fashion-MNIST:}
The dimension of the encoding used for distance computation in kNN is $50$. 
% The architecture and training procedure follows the one available at {\texttt {https://blog.keras.io/building-autoencoders-in-keras.html}}
\begin{align*}
    &\text{FC[Units: 500, Activation: Relu]} \\
    &\rightarrow \text{FC[Units: 500, Activation: Relu]} \\
    &\rightarrow \text{FC[Units: 50, Activation: Relu]} \\
    &= \textit{(embedding)} \\
    &\rightarrow \text{FC[Units: 500, Activation: Relu]} \\
    &\rightarrow \text{FC[Units: 500, Activation: Relu]} \\
    &\rightarrow \text{FC[Units: 784, Activation: Sigmoid]} 
\end{align*}

% \begin{align*}
% &\text{Conv[Kernel: 3x3, Stride: 1, Output: 28x28x16, Activation: Relu]} \\
% &\rightarrow \text{Pool[Stride: 2, Output: 14x14x16]} \\
% &\rightarrow \text{Conv[Kernel: 3x3, Stride: 1, Output: 14x14x8, Activation: Relu]} \\
% &\rightarrow \text{Pool[Stride: 2, Output: 7x7x8]} \\
% &\rightarrow \text{Conv[Stride: 3, Stride: 1, Output: 7x7x8, Activation: Relu]} \\
% &\rightarrow \text{MaxPool[Stride: 2, Output: 4x4x8]} \\
% &= \textit{(embedding)} \\
% &\rightarrow \text{Conv[Kernel: 3x3, Stride: 1, Output: 4x4x8, Activation: Relu]} \\
% &\rightarrow \text{UpSampling[Size: 2x2, Output: 8x8x8]} \\
% &\rightarrow \text{Conv[Kernel: 3x3, Stride: 1, Output: 8x8x8, Activation: Relu]} \\
% &\rightarrow \text{UpSampling[Size: 2x2, Output: 16x16x8]} \\
% &\rightarrow \text{Conv[Kernel: 3x3, Output: 14x14x16, Activation: Relu]} \\
% &\rightarrow \text{UpSampling[Size: 2x2, Output: 28x28x16]} \\
% &\rightarrow \text{Conv[Kernel: 3x3, Stride: 1, Output: 28x28x1, Activation: Sigmoid]} \\
% \end{align*}

\textbf{CIFAR-10:} The dimension of the encoding used for distance computation in kNN is $256$. The architecture and training procedure follows the one available at {\texttt {https://github.com/shibuiwilliam/Keras\_Autoencoder}}.

\begin{align*}
&\text{Conv[Kernel: 3x3, Stride: 1, Output: 32x32x64, Activation: Relu]} \\
&\rightarrow \text{Pool[Stride: 2, Output: 16x16x64]} \\
&\rightarrow \text{Conv[Kernel: 3x3, Stride: 1, Output: 16x16x32, Normalization: BatchNorm, Activation: Relu]} \\
&\rightarrow \text{Pool[Stride: 2, Output: 8x8x32]} \\
&\rightarrow \text{Conv[Stride: 3, Output: 8x8x16, Normalization: BatchNorm, Activation: Relu]} \\
&\rightarrow \text{MaxPool[Stride: 2, Output: 4x4x16]} \\
&= \textit{(embedding)} \\
&\rightarrow \text{Conv[Kernel: 3x3, Stride: 1, Output: 4x4x16, Normalization: BatchNorm, Activation: Relu]} \\
&\rightarrow \text{UpSampling[Size: 2x2, Output: 8x8x16]} \\
&\rightarrow \text{Conv[Kernel: 3x3, Stride: 1, Output: 8x8x32, Normalization: BatchNorm, Activation: Relu]} \\
&\rightarrow \text{UpSampling[Size: 2x2, Output: 16x16x32]} \\
&\rightarrow \text{Conv[Kernel: 3x3, Output: 16x16x64, Normalization: BatchNorm, Activation: Relu]} \\
&\rightarrow \text{UpSampling[Size: 2x2, Output: 32x32x64]} \\
&\rightarrow \text{Conv[Kernel: 3x3, Stride: 1, Output: 32x32x3, Normalization: BatchNorm, Activation: Sigmoid]} \\
\end{align*}

For the MNIST experiments with \name{}, we used a network similar to the large-MNIST network used in the previous experiments:
\begin{align*}
&\text{Conv[Kernel: 5x5, Stride: 1, Output: 24x24x20, Activation: Relu]} \\
&\rightarrow \text{Pool[Stride: 2, Output: 12x12x20]} \\
&\rightarrow \text{Conv[Kernel: 5x5, Stride: 1, Output: 8x8x50, Activation: Relu]} \\
&\rightarrow \text{Pool[Stride: 2, Output: 4x4x50]} \\
&\rightarrow \text{FC[Units: 500, Activation: Relu]} \\
\end{align*}
For the Fashion-MNIST and CIFAR experiments with \name{}, we use the ResNet18 architecture as described in \url{ https://github.com/kuangliu/pytorch-cifar}.

% For simplicity, we did not use data augmentation of any type, and we did not change the dimension of the final layer.

\paragraph{Hyperparameters.}

For this experiment, we used an SGD optimizer with a momentum parameter of $0.9$, with a batch size of $100$ queries and $100$ neighbor candidates at a time. We chose the temperature hyperparameter from the set $\{1, 16, 64\}$, the constant learning rate from $\{10^{-4}, 10^{-5}\}$, and the number of nearest neighbors $k$ from the set \{1, 3, 5, 9\}. The model with the best evaluation loss was evaluated on the test set. We suspect that accuracy improvements can be made by a more expensive hyperparameter search and a more fine-grained learning rate schedule.

% The CIFAR-10 convolutional autoencoder was trained for $100$ epochs using the Adam optimizer with a batch size of $32$ and learning rate of $10^-3$. 

\begin{table}[t]
\centering
\caption{Accuracies of Deterministic and Stochastic \name{} for differentiable $k$-nearest neighbors, broken down by $k$.}\label{tab:dknn-by-k}
\scriptsize
\begin{tabular}{|c|c|c|c|}
    \toprule
     Dataset & $k$ & Deterministic \name{} & Stochastic \name{} \\

     \midrule
     MNIST & 1 & 99.2\% & 99.1\% \\
      & 3 & \textbf{99.5\%} & 99.3\%  \\
      & 5 & 99.3\% & \textbf{99.4\%} \\
      & 9 & 99.3\% & \textbf{99.4\%}  \\ 
      \midrule
     Fashion-MNIST & 1 & 92.6\% & 92.2\% \\
      & 3 & 93.2\% & 93.1\%  \\
      & 5 & \textbf{93.5\%} & 93.3\% \\
      & 9 & 93.0\% & \textbf{93.4\%}  \\
           \midrule
CIFAR-10 & 1 & 88.7\% & 85.1\% \\
      & 3 & 90.0\% & 87.8\%  \\
      & 5 & 90.2\% & 88.0\%  \\
     & 9 & \textbf{90.7\%} & \textbf{89.5}\% \\ 
     \bottomrule
    %  CIFAR-10 & 1 & 0.5932 & 0.5918 2 \\
    %   & 3 & 0.6405 & 0.6414  \\
    %   & 5 & 0.6479 & 0.6575  \\
    %  & 9 & 0.6649 & \textbf{0.6654} \\ \hline
    %  MNIST & 1 & 0.9831 & 0.9842 \\
    %   & 3 & \textbf{0.9878} & 0.9865  \\
    %   & 5 & 0.9868 & 0.9865 \\
    %   & 9 & 0.9823 & 0.9834  \\
    %  \hline
\end{tabular}
\end{table}

\paragraph{Accuracy for different $k$.} In Table~\ref{tab:dknn-by-k}, we show the performance of Deterministic and Stochastic \name{} for different choice of the hyperparameter $k$ for the differentiable $k$-nearest neighbors algorithm.

\section{Loss functions}

For each of the experiments in this work, we assume we have access to a finite dataset $\D= \{(\bx^{(1)}, \by^{(1)}), (\bx^{(2)}, \by^{(2)}), \ldots\}$. Our goal is to learn a predictor for $\by$ given $\bx$, as in a standard supervised learning (classification/regression) setting. Below, we state and elucidate the semantics of the training objective optimized by Deterministic and Stochastic \name{} for the sorting and quantile regression experiments. 

\subsection{Sorting handwritten numbers}
We are given a dataset $\D$ of sequences of large-MNIST images and the permutations that sort the sequences. That is, every datapoint in $\D$ consists of an input $\bx$, which corresponds to a sequence containing $n$ images, and the desired output label $\by$, which corresponds to the permutation that sorts this sequence (as per the numerical values of the images in the input sequence). For example, Figure~\ref{fig:qr_sorting} shows one input sequence of $n=5$ images, and 
%Figure~\ref{fig:qr_sorting}b shows the $5 \times 5$ permutation matrix corresponding to 
the permutation $\by=[3,5,1,4,2]$ that sorts this sequence.

For any datapoint $\bx$, let $\ell_{\text{CE}}(\cdot)$ denote the average multiclass cross entropy (CE) error between the rows of the true permutation matrix $P_{\by}$ and a permutation matrix $P_{\byh}$ corresponding to a predicted permutation, say $\byh$.
\[
\ell_{\text{CE}}(P_{\by}, P_{\byh}) = \frac{1}{n}\sum_{i=1}^n \sum_{j=1}^n \ind (P_\by[i,j]=1)\log P_{\byh}[i, j]
\]
where $\ind(\cdot)$ denotes the indicator function. Now, we state the training objective functions for the Deterministic and Stochastic \name{} approaches respectively.

\begin{enumerate}

    \item Deterministic \name{}
    \begin{align}
       \min_{\phi}\frac{1}{\vert \D \vert}\sum_{(\bx, \by)\in \D} \ell_{\text{CE}}(P_\by, \widehat{P}_{\sort(\bs)})\\
       \text{where each entry of $\bs$ is given as } s_j = h_\phi(\bx_j). \nonumber
    \end{align}

    \item Stochastic \name{}
    \begin{align}
       \min_{\phi}\frac{1}{\vert \D \vert}\sum_{(\bx, \by)\in \D}  \ex_{\bz \sim q(\bz\vert \bs)}\left[\ell_{\text{CE}}(P_\by, \widehat{P}_{\bz}))\right]\\
       \text{where each entry of $\bs$ is given as } s_j = h_\phi(\bx_j). \nonumber
    \end{align}
\end{enumerate}
 To ground this in our experimental setup, the score $s_j$ for each large-MNIST image $\bx_j$ in any input sequence $\bx$ of $n=5$ images is obtained via a CNN $h_\phi()$ with parameters $\phi$. Note that the CNN parameters $\phi$ are shared across the different images $\bx_1, \bx_2, \ldots, \bx_n$ in the sequence for efficient learning.
 
\subsection{Quantile regression}

In contrast to the previous experiment, here we are given a dataset $\D$ of sequences of large-MNIST images and only the numerical value of the median element for each sequence. For example, the desired label corresponds to $y=2960$ (a real-valued scalar) for the input sequence of $n=5$ images in Figure~\ref{fig:qr_sorting}.

For any datapoint $\bx$, let $\ell_{\text{MSE}}(\cdot)$ denote the mean-squared error between the true median $y$ and the prediction, say $\widehat{y}$.
\[
\ell_{\text{MSE}}(y, \widehat{y}) = \Vert y -  \widehat{y} \Vert_2^2
\]

For the \name{} approaches, we optimize the following objective functions.
\begin{enumerate}
    \item Deterministic \name{}
    \begin{align}
        \min_{\phi, \theta}\frac{1}{\vert \D \vert}\sum_{(\bx, y)\in \D} \ell_{\text{MSE}}(y, g_\theta(\widehat{P}_{\sort(\bs)}\bx))\\
        \text{where each entry of $\bs$ is given as } s_j = h_\phi(\bx_j). \nonumber
    \end{align}
    \item Stochastic \name{}
    \begin{align}
        \min_{\phi, \theta}\frac{1}{\vert \D \vert}\sum_{(\bx, \by)\in \D}  \ex_{\bz \sim q(\bz\vert \bs)}\left[\ell_{\text{MSE}}(y, g_\theta(\widehat{P}_{\bz}\bx))\right]\\
        \text{where each entry of $\bs$ is given as } s_j = h_\phi(\bx_j). \nonumber
    \end{align}
\end{enumerate}

As before, the score $s_j$ for each large-MNIST image $\bx_j$ in any input sequence $\bx$ of $n$ images is obtained via a CNN $h_\phi()$ with parameters $\phi$. Once we have a predicted permutation matrix $\widehat{P}_{\sort(\bs)}$ (or $\widehat{P}_{\bz}$) for deterministic (or stochastic) approaches, we extract the median image via $\widehat{P}_{\sort(\bs)}\bx$ (or $\widehat{P}_{\bz}\bx$). Finally, we use a neural network $g_\theta(\cdot) $with parameters $\theta$ to regress this image to a scalar prediction for the median. 

\end{document}